\newcommand{\tikzdir}{tikz/}
\newtheorem{theorem}{Theorem}
\newtheorem{proposition}% [theorem]
{Proposition}
\newtheorem{corollary}% [theorem]
{Corollary}
\newtheorem{definition}% [theorem]
{Definition}
\newtheorem{remark}% [theorem]
{Remark}
\newtheorem{property}% [theorem]
{Property}
\newtheorem{fact}% [theorem]
{Fact}
\newtheorem{example}{Example}
\newcommand{\ds}{\displaystyle}
\newcommand{\secref}[1]{Section~\ref{#1}}
\newcommand{\figref}[1]{Fig.~\ref{#1}}
\newcommand{\thmref}[1]{Theorem~\ref{#1}}
\newcommand{\propref}[1]{Proposition~\ref{#1}}
\newcommand{\proptyref}[1]{Property~\ref{#1}}
\newcommand{\corolref}[1]{Corollary~\ref{#1}}
\newcommand{\factref}[1]{Fact~\ref{#1}}
\newcommand{\defref}[1]{Definition~\ref{#1}}
\newcommand{\ev}{{\mathbb{E}}}
\newcommand{\pr}{\mathbb{P}}
\newcommand{\prob}[1]{\pr\left\{#1\right\}}
\newcommand{\E}[1]{\ev\left[#1\right]}
\newcommand{\bE}[1]{\ev\bigl[#1\bigr]}
\newcommand{\Ed}[2]{\ev_{#1}\left[#2\right]}
\newcommand{\ellh}{\ell_\hinge}
\newcommand{\hinge}{\mathrm{hinge}}
\renewcommand{\vec}[1]{\underline{#1}}
\newcommand{\cC}{\mathcal{C}}
\newcommand{\cD}{\mathcal{D}}
\newcommand{\cF}{\mathcal{F}}
\newcommand{\cI}{\mathcal{I}}
\newcommand{\cP}{\mathcal{P}}
\newcommand{\cS}{\mathcal{S}}
\newcommand{\cT}{\mathcal{T}}
\newcommand{\cX}{\mathcal{X}}
\newcommand{\cY}{\mathcal{Y}}
\newcommand{\cZ}{\mathcal{Z}}
\newcommand{\defeq}{\triangleq}
\newcommand{\T}{\mathrm{T}}
\newcommand{\norm}[1]{\|#1\|}
\newcommand{\bnorm}[1]{\bigl\|#1\bigr\|}
\newcommand{\bbnorm}[1]{\left\|#1\right\|}
\newcommand{\ip}[2]{\langle#1,#2\rangle}
\newcommand{\fip}[2]{\left\langle#1,#2\right\rangle}
\newcommand{\fproj}[2]{#1|_{#2}} % func projection
\newcommand{\fh}{\hat{f}}
\newcommand{\gh}{\hat{g}}
\newcommand{\xh}{\hat{x}}
\newcommand{\yh}{{\hat{y}}}
\newcommand{\phih}{\hat{\phi}}
\newcommand{\psih}{\hat{\psi}}
\newcommand{\cXh}{\hat{\cX}}
\newcommand{\cYh}{\hat{\cY}}
\newcommand{\cXt}{\tilde{\cX}}
\newcommand{\cYt}{\tilde{\cY}}
\newcommand{\Sh}{\hat{S}}
\newcommand{\Th}{\hat{T}}
\newcommand{\Xh}{\hat{X}}
\newcommand{\Yh}{\hat{Y}}
\newcommand{\Xt}{\tilde{X}} 
\newcommand{\Yt}{\tilde{Y}} 
\newcommand{\Pt}{\tilde{P}}
\DeclareMathOperator{\diag}{diag}
\DeclareMathOperator{\corr}{cov}
\DeclareMathOperator{\var}{var}
\DeclareMathOperator{\tr}{tr}
\DeclareMathOperator*{\argmax}{arg\,max}
\DeclareMathOperator*{\argmin}{arg\,min}
\DeclareMathOperator*{\maximize}{maximize}
\DeclareMathOperator*{\minimize}{minimize}
\DeclareMathOperator*{\st}{subject~to}
\DeclareMathAlphabet{\mathbbb}{U}{bbold}{m}{n}  % bb bold numbers
\newcommand{\kron}{\mathbbb{1}}
\newcommand{\id}{\mathbb{I}}
\newcommand{\cW}{\mathcal{W}}
\newcommand{\funcs}{\cF}
\newcommand{\spc}[1]{\funcs_{#1}}
\newcommand{\spcn}[2]{\spc{#1}^{\,#2}}
\newcommand{\spcns}[1]{\spc{#1}^{\,*}} % anydim
\newcommand{\Unif}{\mathrm{Unif}}
\newcommand{\fss}{f^*}
\newcommand{\gss}{g^*}
\newcommand{\gssT}{g^{*\T}}
\newcommand{\mcf}[1]{f^*_{#1}}
\newcommand{\mcg}[1]{g^*_{#1}}
\newcommand{\As}{\mathscr{A}} % 
\newcommand{\alg}{\mathsf{alg}} % 
\newcommand{\Hs}{\mathscr{H}} % Hscore
\newcommand{\Hnest}{\Hs_{\sf nest}}
\newcommand{\lpmi}{\mathfrak{i}} % local pmi
\newcommand{\mdn}[1]{\md_{#1}} % modal decomposition
\newcommand{\mdnl}[1]{\md_{\leq #1}} % modal decomposition
\DeclareMathOperator{\md}{\zeta} % modal decomposition
\DeclareMathOperator{\rank}{rank} % rank
\newcommand{\La}{\Lambda}
\newcommand{\xedit}[1]{\textcolor{red}{\##1}}
\newcommand\copyrighttext{%
  \footnotesize \textcopyright 2024 IEEE. Personal use of this material is permitted. Permission from IEEE must be obtained for all other uses, in any current or future media, including reprinting/republishing this material for advertising or promotional purposes, creating new collective works, for resale or redistribution to servers or lists, or reuse of any copyrighted component of this work in other works.

  IEEE Xplore link to the published article: \href{https://ieeexplore.ieee.org/document/10735284}{\textbf{DOI} 10.1109/Allerton63246.2024.10735284}.}
\newcommand\copyrightnotice{%
\begin{tikzpicture}[remember picture,overlay]
\node[anchor=south,yshift=2pt] at (current page.south) {\fbox{\parbox{\dimexpr\textwidth-\fboxsep-\fboxrule\relax}{\copyrighttext}}};
\end{tikzpicture}%
}
\newcommand{\optional}[1]{} 
\title{\LARGE \bf
  Dependence Induced Representations}
\author{Xiangxiang Xu$^{1}$
  and Lizhong Zheng$^{1}$%
  \thanks{*This work was supported in part by the Office of Naval Research (ONR) under grant N00014-19-1-2621.}%
  \thanks{$^{1}$Xiangxiang Xu and Lizhong Zheng are with the Department of Electrical Engineering and Computer Science, Massachusetts Institute of Technology, Cambridge, MA 02139  {\tt\small \{xuxx, lizhong\}@mit.edu}}%
}
\newcommand{\ly}{\mathtt{y}} %
\newcommand{\dname}{D-loss}
\newcommand{\dnames}{D-losses}
\newcommand{\Dname}{D-Loss}
\newcommand{\Dnames}{D-Losses}
\newcommand{\radj}{regular}
\newcommand{\rdname}{\radj~\dname}
\newcommand{\rdnames}{\radj~\dnames}
\begin{document}

\maketitle
\copyrightnotice

\thispagestyle{empty}
\pagestyle{empty}

\begin{abstract}
We study the problem of learning feature representations from a pair of random variables, where we focus on the representations that are induced by their dependence. We provide sufficient and necessary conditions for such dependence induced representations, and illustrate their connections to Hirschfeld--Gebelein--R\'{e}nyi (HGR) maximal correlation functions and minimal sufficient statistics. We characterize a large family of loss functions that can learn dependence induced representations, including cross entropy, hinge loss, and their regularized variants. In particular, we show that the features learned from this family can be expressed as the composition of a loss-dependent function and the maximal correlation function, which reveals a key connection between representations learned from different losses. Our development also gives a statistical interpretation of the neural collapse phenomenon observed in deep classifiers. Finally, we present the learning design based on the feature separation, which allows hyperparameter tuning during inference.
\end{abstract}

\section{Introduction}

Deep learning techniques \cite{lecun2015deep} have demonstrated significant progress in extracting useful information from high-dimensional structured data. The learned feature representations can be easily adapted to different learning objectives, data modalities, or learning tasks. Compared with the practical successes, the understanding of the learned representations is rather limited: though it is widely acknowledged that such representations contain useful information for understanding the statistical behaviors of data \cite{breiman2001statistical}, theoretical analyses of learned representations are still challenging. In particular, the difficulty comes from the unknown complicated structures behind high-dimensional data and the huge design space of practical learning algorithms. It is often difficult to obtain analytical solutions to the representations, except for a few cases with principled designs \cite{xu2024neural}. Theoretical analyses often rely on
additional assumptions, making the results restricted to specific scenarios.

In contrast to the difficulty faced by theoretical characterizations, deep learning practices suggest the other extreme: despite the huge design space of loss functions and hyperparameter settings, the learned representations can still be reused or adapted to different setups; empirical studies also indicate 
that representations obtained from completely different methods can be highly similar \cite{huhposition}. A better understanding of this similarity can provide
 key insights on the learning mechanism and lead to more efficient learning designs.

In this paper, we tackle the representation learning problem from a statistical perspective. We consider learning representations from a pair of variables $X, Y$ and focus on the representations determined only by the $X-Y$ dependence. We refer to such representations as dependence induced representations, formalized as the representations invariant to transformations that preserve the dependence structure. We provide sufficient and necessary conditions for such representations and demonstrate their deep connections to the Hirschfeld--Gebelein--R\'{e}nyi (HGR) maximal correlation functions and the notion of minimal sufficiency. Furthermore, we characterize a family of loss functions, termed \dnames, which can be used for learning dependence induced representations. Specifically, we demonstrate that the optimal feature is the composition of a loss-dependent term and the maximal correlation function, revealing the key connection between different feature representations.
We also show that \dnames{} cover a large class of practical loss functions, including cross entropy and hinge loss as special cases.
Our characterization also provides a statistical interpretation of the neural collapse behavior \cite{papyan2020prevalence} in training deep classifiers. Based on our developments, we propose a design of feature adapters that allow hyperparameter tuning during inference.

\section{Notations and Preliminaries}
For a random variable $Z$, we use $\cZ$ to denote the corresponding alphabet and use $z$ to denote a specific value in $\cZ$. We use $P_Z$ to denote the probability distribution\footnote{Throughout our development, we restrict to discrete random variables on finite alphabets with positive probability masses, i.e., $P_Z(z) > 0$ for all $z \in \cZ$.}
of $Z$ and denote the collection of probability distributions supported on $\cZ$ by $\cP^\cZ$. For $n \geq 1$, we denote $[n] \defeq \{1, 2, \dots, n\}$.

\subsection{Feature Space and Representation Learning} 

Given data $Z$ with domain $\cZ$, we refer to the mappings from $\cZ$ to vector space as feature functions or feature mappings and refer to the mapped values as feature representations (or simply, features). In particular, we use $\spc{\cZ} \defeq \{ \cZ \to \mathbb{R}\}$ to denote the collection of one-dimensional feature functions, and denote the collection of $k$-dimensional features by $\spcn{\cZ}{k} \defeq \left(\spc{\cZ}\right)^k = \{ \cZ \to \mathbb{R}^k\}$ for each $k \geq 1$. We use $\spcns{\cZ} \defeq \bigcup_{k \geq 1}\spcn{\cZ}{k}$ to denote all such feature functions. \optional{For given $f \in \spcns{\cZ}$, we use $f[\cZ] \defeq \{f(z) \colon z \in \cZ\}$ to denote its image.} For $f, h \in \spcns{\cZ}$, we denote $\La_{f, h} \defeq \E{f(Z)h^\T(Z)}$. Specifically, we use $\La_f$ to represent $\La_{f, f} = \E{f(Z)f^\T(Z)}$.

\subsubsection{Abstractions; Informational Equivalence}
Given two random variables $X$ and $Y$, following the convention of  \cite{shannon1953lattice}, we call $Y$ an \emph{abstraction} of $X$ if there exists a function $h\colon \cX \to \cY$ such that $Y = h(X)$. If each of $X, Y$ is an abstraction of the other,  we say that $X$ and $Y$ are  \emph{informationally equivalent}\footnote{The concept of informational equivalence was also introduced independently in later psychological studies, e.g., \cite{Simon1978-SIMOTF-5}.}.

By definition, the conditional expectation $\E{Y|X}$ is an abstraction of $X$. Specifically, given $f \in \spcns{\cZ}$ and $s \in \spcns{\cZ}$, $\fproj{f}{s}(Z)$ is an abstraction of $s(Z)$, where we have defined %
\begin{align}
  \fproj{f}{s}(z) \defeq \E{f(Z)|s(Z) = s(z)}, \qquad \text{for all }z \in \cZ.
  \label{eq:fproj:def}
\end{align}

\optional{The $\fproj{f}{s}$ corresponds to the projection of $f$ onto the space of all functions of $s(Z)$.}

\subsubsection{Representation Learning Algorithms} 
We restrict to the representation learning involving a pair of random variables $(X, Y) \sim P_{X, Y}$.%
We refer to each mapping of the form  
\begin{align}
  (X, Y) \overset{}{\mapsto}  (f, g)
  \label{eq:rep:alg}
\end{align}
as a representation learning algorithm: it takes the random variable pair $(X, Y)$ as the input and outputs 
feature function pair $f \in \spcn{\cX}{k}$ and $g \in \spcn{\cY}{k}$ for a given\footnote{Though the $k$ is required to make the mapping \eqref{eq:rep:alg} well-defined, we will only specify the choice when necessary.}
 $k \geq 1$, which map original data $X, Y$ to the $k$-dimensional feature representations $f(X), g(Y) \in \mathbb{R}^k$, respectively. %
Note that this is a mathematical simplification of the practical learning algorithms: we focus on the input-output relation, and omit the implementation details such as the computation. 

\optional{remark: single sided version as a special case, only output one of $f$ or $g$}

\subsection{Canonical Dependence Kernel and Maximal Correlation}%
Given $(X, Y) \sim P_{X, Y}$, we define the associated \emph{canonical dependence kernel} (CDK) function \cite{HuangMWZ2024} as a joint function $\lpmi_{X; Y} \in \spc{\cX \times \cY}$, with
\begin{align}
  \lpmi_{X; Y}(x, y) \defeq \frac{P_{X, Y}(x, y)}{P_{X}(y)P_Y(y)} - 1.
  \label{eq:cdk:def}
\end{align}

We can apply the modal decomposition \cite{xu2024neural}, a singular value decomposition (SVD) in function space, to write the CDK as a superposition of rank-one singular modes:
\begin{align}
  \lpmi_{X; Y}(x, y) = \sum_{i = 1}^K  \sigma_i \mcf{i}(x) \mcg{i}(y),
  \label{eq:cdk:md}
\end{align}
where we have the singular values $\sigma_1 \geq \sigma_2 \geq \dots \geq \sigma_K > 0$, and where $K \geq 0$ denotes the rank of $\lpmi_{X; Y}$. We also have the orthogonality relations of functions $\mcf{i}, \mcg{i}$: $\E{\mcf{i}(X) \mcf{j} (X)} = \E{\mcg{i}(Y) \mcg{j}(Y)} = \delta_{i, j}$, for all $ i, j \in [K]$. %

The functions $\mcf{i}, \mcg{i}$ correspond to the maximally correlated functions in $\spc{\cX}$ and $\spc{\cY}$, known as Hirschfeld--Gebelein--R\'{e}nyi (HGR) maximal correlation functions \cite{hirschfeld1935connection, gebelein1941statistische, renyi1959measures}. To see the connection, let us denote the covariance of $f \in \spc{\cX}$ and $g \in \spc{\cY}$ as
  \begin{align*}
    \corr(f, g) \defeq \Ed{P_{X, Y}}{f(X)g(Y)} - \Ed{P_{X}P_{Y}}{f(X)g(Y)}.
  \end{align*}
  Then, for all $i = 1, \dots, K$, we have $\sigma_i = \corr(\mcf{i}, \mcg{i}) = \Ed{P_{X, Y}}{\mcf{i}(X)\mcg{i}(Y)}$ and  $\ds (\mcf{i}, \mcg{i}) = \argmax_{f_i, g_i}\, \corr(f_i, g_i)$,
  where the maximization is taken over all $f_i \in \spc{\cX}$ and $g_i \in \spc{\cY}$ such that $\E{f_i^2(X)} = \E{g_i^2(Y)} = 1$ and $\E{f_i(X)\mcf{j}(X)} = \E{g_i(Y)\mcg{j}(Y)} = 0$ for all $j < i$.

  For convenience, we also define 
\begin{align}
\fss \defeq (\mcf{1}, \dots, \mcf{K})^\T, \quad \gss \defeq (\mcg{1}, \dots, \mcg{K})^\T.\label{eq:fss:gss}  
\end{align}%

\subsection{Sufficient Statistics and Minimal Sufficiency}%
We start with the definition of sufficiency and minimal sufficiency. For more detailed discussions, see, e.g.,  
 \cite[Section 2.9]{cover2006elements}.
\begin{definition}
 Given $X, Y$, $f(X)$ is a sufficient statistic of $X$ for inferring $Y$ if we have the Markov relation $X-f(X)-Y$. We call $f(X)$ a minimal sufficient statistic if $f(X)$ is sufficient, and for each given sufficient statistic $T$ of $X$, $f(X)$ is an abstraction of $T$.
\end{definition}

By definition, all minimal sufficient statistics are informationally equivalent. Therefore, it suffices to specify one representative among the equivalent class, say, $S = s(X)$ from some $s$. Then, we can characterize the collection of all sufficient statistics as $\{f(X)\colon \text{$S$ is an abstraction of $f(X)$}\}$.

We can also express the sufficiency in terms of CDK functions. Specifically, let $\lpmi_{X; Y}(X, Y)$ indicate the random variable $\gamma(X, Y)$ where $\gamma = \lpmi_{X; Y}$. Then we have the following property. The proof is omitted.
\begin{property}
  $f(X)$ is a sufficient statistic if and only if $\lpmi_{X; Y}(X, Y)$ is an abstraction of $(f(X), Y)$.
\end{property}

\optional{
\begin{proof}
  \xedit{..}  
\end{proof}
}
By symmetry, $g(Y)$ is sufficient if and only if $\lpmi_{X; Y}(X, Y)$ is an abstraction of $(X, g(Y))$. This motivates us to consider a symmetric notion of sufficiency: we say that $f(X)$ and $g(Y)$ are \emph{jointly sufficient} if $\lpmi_{X; Y}(X, Y)$ is an abstraction of $(f(X), g(Y))$. 
In particular, we have the following characterization. The proof is omitted.

\begin{property}
\label{propty:joint:ss}
The following statements are equivalent:
\begin{itemize}
\item $f(X)$ and $g(Y)$ are jointly sufficient;
\item Both $f(X)$ and $g(Y)$ are sufficient statistics;
\item We have the Markov chain $X-f(X)-g(Y)-Y$.
\end{itemize}  
\end{property}
\optional{
\begin{proof}
  \xedit{start here}
\end{proof}
}

\begin{remark}
  We can replace $\lpmi_{X; Y}(X, Y)$ in these statements by its informationally equivalent transformations. For example, we can obtain the pointwise mutual information $ \log(1 + \lpmi_{X; Y}(x, y))  = \left[\log\frac{P_{X, Y}}{P_XP_Y}\right](x, y)$ by considering the mapping $t \mapsto \log(1 + t)$.
\end{remark}

\begin{remark}
  We can have equivalent results expressed in terms of information measures. Specifically, the abstraction relation and informational equivalence were originally characterized by using conditional entropies \cite{shannon1953lattice}, and the sufficiency can be characterized as some mutual information equalities \cite[Section 2.9]{cover2006elements}.
\end{remark}

\section{Definition and Fundamental Properties}

\subsection{Dependence Preserving Transformations}

We first formalize \emph{dependence preserving transformations} as follows.

\begin{definition}
\label{def:dep}
Given $(X, Y) \sim P_{X, Y}$, a \emph{dependence preserving transformation} 
generates transformed variables $(\Xh, \Yh)$, where $\Xh = \xi(X, Z)$ and  $\Yh = \eta(Y, W)$ are characterized by
\begin{itemize}
\item transformed alphabets $\cXh, \cYh$ of $\Xh$ and $\Yh$, respectively; 
\item random sources $Z \in \cZ$, $W \in \cW$ satisfying the Markov chain $Z - X - Y - W$, characterized by %
  conditional distributions $P_{Z|X}, P_{W|Y}$;
\item deterministic mappings $\xi \colon \cX \times \cZ \to \cXh$, $\eta \colon \cY \times \cW \to \cYh$ such that $X$ and $Y$ are abstractions of $\Xh$, $\Yh$, respectively.\footnote{This can be achieved when we have
    \begin{align*}
      P_{\Xh|X}(\xh|x)P_{\Xh|X}(\xh|x') &= 0 \text{ for all $\xh \in \cXh$ and $x, x' \in \cX$ with $x \neq x'$},\\
      P_{\Yh|Y}(\yh|y)P_{\Yh|Y}(\yh|y') &= 0 \text{ for all $\yh \in \cYh$ and $y, y' \in \cY$ with $ y\neq y'$}.
    \end{align*}
}
\end{itemize}
\end{definition}

With a slight abuse of notation, we use $\xi^{-1}$ and $\eta^{-1}$ to denote the mappings that decode $X, Y$ from $\Xh$, $\Yh$, respectively. This gives $X = \xi^{-1}(\Xh)$ and $Y = \eta^{-1}(\Yh)$.%

\begin{remark}
  A degenerated case of dependence preserving transformations is the one-to-one deterministic mapping, which can be obtained by setting both $Z$ and $W$ to constants.
\end{remark}

\optional{why we must allow redundant; counterexample,  $\max P_X(x)$}

\optional{$Z, W$: stronger condition $P_{Z, W|X, Y} = P_Z P_W$}

To see why the dependence is preserved, note that we can obtain $\Xh - X - Y - \Yh$ from the Markov relation $Z-X-Y-W$, i.e., the transformations do not introduce additional information. Moreover, there is no loss of information: we have $X - \Xh - \Yh - Y$ as $X$ and $Y$ are uniquely determined from $\Xh$ and $\Yh$, respectively. We can formalize this dependence preserving property in terms of the CDK as follows. The proof is omitted.

\begin{proposition}
\label{prop:cdk:eq}
Given $(X, Y)$, let $\Xh \in \cXh, \Yh \in \cYh$ be the variables obtained  from dependence preserving transformations $\xi, \eta$ [cf. \defref{def:dep}]. Then, for all $(\xh, \yh) \in \cXh \times \cYh$, we have $\lpmi_{\Xh; \Yh}(\xh, \yh) = \lpmi_{X; Y}(\xi^{-1}(\xh), \eta^{-1}(\yh))$.
\end{proposition}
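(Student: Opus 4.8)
The plan is to compute $\lpmi_{\Xh;\Yh}(\xh,\yh)$ directly from its definition \eqref{eq:cdk:def} by expressing the joint and marginal distributions of $(\Xh,\Yh)$ in terms of those of $(X,Y)$, and then cancelling the common factors contributed by the random sources $Z$ and $W$. Throughout, write $x^\star \defeq \xi^{-1}(\xh)$ and $y^\star \defeq \eta^{-1}(\yh)$ for brevity.

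First I would establish the conditional independence $P_{\Xh,\Yh\,|\,X,Y} = P_{\Xh|X}\cdot P_{\Yh|Y}$. Since $\Xh = \xi(X,Z)$ is a deterministic function of $(X,Z)$ and $Z-X-Y$ holds, we get $\Xh - X - Y$, hence $P_{\Xh|X,Y}=P_{\Xh|X}$; symmetrically $P_{\Yh|X,Y}=P_{\Yh|Y}$. Moreover, the Markov chain $Z-X-Y-W$ gives $P_{Z,W|X,Y}=P_{Z|X}P_{W|Y}$, so conditioned on $(X,Y)=(x,y)$ the variables $\Xh=\xi(x,Z)$ and $\Yh=\eta(y,W)$ are independent; combining these yields the claimed factorization. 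Summing out $(X,Y)$ then gives
\begin{align*}
  P_{\Xh,\Yh}(\xh,\yh) = \sum_{x \in \cX,\, y \in \cY} P_{\Xh|X}(\xh|x)\,P_{X,Y}(x,y)\,P_{\Yh|Y}(\yh|y),
\end{align*}
and likewise $P_{\Xh}(\xh) = \sum_{x} P_{\Xh|X}(\xh|x)P_X(x)$ and $P_{\Yh}(\yh) = \sum_{y} P_{\Yh|Y}(\yh|y)P_Y(y)$.

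Next I would invoke the abstraction requirement in \defref{def:dep} (the condition in its footnote), which forces $P_{\Xh|X}(\xh|x) > 0$ only when $x = x^\star$, and similarly $P_{\Yh|Y}(\yh|y)>0$ only when $y=y^\star$. This collapses each of the sums above to a single term: setting $a \defeq P_{\Xh|X}(\xh|x^\star)$ and $b\defeq P_{\Yh|Y}(\yh|y^\star)$, we get $P_{\Xh,\Yh}(\xh,\yh) = a\,b\,P_{X,Y}(x^\star,y^\star)$, $P_{\Xh}(\xh)=a\,P_X(x^\star)$, and $P_{\Yh}(\yh)=b\,P_Y(y^\star)$. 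Because $\xh\in\cXh$ and $\yh\in\cYh$ carry positive probability, $a,b>0$, so substituting into \eqref{eq:cdk:def} and cancelling $a,b$ yields
\begin{align*}
  \lpmi_{\Xh;\Yh}(\xh,\yh) = \frac{P_{X,Y}(x^\star,y^\star)}{P_X(x^\star)P_Y(y^\star)} - 1 = \lpmi_{X;Y}(x^\star,y^\star),
\end{align*}
which is the claim.

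The distribution bookkeeping is routine; the one step that deserves care — and the main obstacle — is justifying the conditional-independence factorization of $P_{\Xh,\Yh\,|\,X,Y}$ rigorously from the two-sided Markov chain $Z-X-Y-W$ together with $\Xh,\Yh$ being deterministic images of $(X,Z)$ and $(Y,W)$, since this is where all of the structure of \defref{def:dep} is actually used.
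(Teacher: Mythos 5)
Your proof is correct; the paper omits its own proof of this proposition, and your argument --- factorizing $P_{\Xh,\Yh\,|\,X,Y}=P_{\Xh|X}P_{\Yh|Y}$ via the Markov chain $Z-X-Y-W$, collapsing the sums using the footnote condition of \defref{def:dep}, and cancelling the positive factors $a,b$ --- is exactly the direct computation the statement calls for. The one point worth making explicit is that $a,b>0$ follows from the paper's standing assumption that every alphabet element carries positive probability mass, which you do note.
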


\optional{| this is indeed a preorder: reflective \& transitive}

\subsection{Invariance, Maximal Correlation \& Minimal Sufficiency}

We first introduce the following definition.

\begin{definition}
  \label{def:di}%
  Given $(X, Y)$, we say a mathematical object\footnote{
This includes scalars, matrices, functions, random variables, etc.
} $\theta(X, Y)$ defined on $(X, Y)$ is \emph{dependence induced} if it is invariant to dependence preserving transformations, i.e., $\prob{\theta(X, Y) = \theta(\Xh, \Yh)} = 1$, for all $(\Xh, \Yh)$ obtained from $(X, Y)$ through dependence preserving transformations.%
\end{definition}
Therefore, being dependence induced implies the object relies only on the dependence structure, regardless of the irrelevant information introduced during dependence preserving transformations, such as the $Z, W$ in \defref{def:dep}.

\begin{theorem}
  \label{thm:di}
  Given $(X, Y)$, $(\fss(X), \gss(Y))$ is dependence induced, and we can 
construct a dependence preserving transformation to transform $(\fss(X), \gss(Y))$ into $(X, Y)$, %
where $\fss, \gss$ are the maximal correlation functions [cf. \eqref{eq:fss:gss}].
\end{theorem}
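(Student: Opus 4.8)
The plan is to prove the two assertions separately. For the first (that $(\fss(X),\gss(Y))$ is dependence induced), I would transport the modal decomposition \eqref{eq:cdk:md} through \propref{prop:cdk:eq}; for the second (the backward construction), I would exhibit an explicit dependence preserving transformation built from the sufficiency of $\fss(X)$ and $\gss(Y)$.

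\textbf{Invariance.} Let $(\Xh,\Yh)$ arise from $(X,Y)$ by a dependence preserving transformation with decoders $\xi^{-1},\eta^{-1}$, so $\xi^{-1}(\Xh)=X$ and $\eta^{-1}(\Yh)=Y$ almost surely. Applying \propref{prop:cdk:eq} and then \eqref{eq:cdk:md},
\begin{align*}
  \lpmi_{\Xh;\Yh}(\xh,\yh)
  &=\lpmi_{X;Y}\bigl(\xi^{-1}(\xh),\eta^{-1}(\yh)\bigr)\\
  &=\sum_{i=1}^{K}\sigma_i\,\bigl(\mcf{i}\circ\xi^{-1}\bigr)(\xh)\,\bigl(\mcg{i}\circ\eta^{-1}\bigr)(\yh).
\end{align*}
I would then check that this is a modal decomposition of $\lpmi_{\Xh;\Yh}$: the functions $\mcf{i}\circ\xi^{-1}$ are orthonormal under $P_{\Xh}$ because $\E{(\mcf{i}\circ\xi^{-1})(\Xh)\,(\mcf{j}\circ\xi^{-1})(\Xh)}=\E{\mcf{i}(X)\mcf{j}(X)}=\delta_{i,j}$, using $\xi^{-1}(\Xh)=X$, and symmetrically for $\mcg{i}\circ\eta^{-1}$. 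Since the singular values $\sigma_1\ge\cdots\ge\sigma_K>0$ are unchanged, uniqueness of the modal decomposition lets me take $\fss\circ\xi^{-1}$ and $\gss\circ\eta^{-1}$ as maximal correlation functions of $(\Xh,\Yh)$; evaluating them gives $(\fss\circ\xi^{-1})(\Xh)=\fss(X)$ and $(\gss\circ\eta^{-1})(\Yh)=\gss(Y)$ almost surely, which is exactly the defining property in \defref{def:di}.

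\textbf{Backward construction.} First, $\fss(X)$ is a sufficient statistic: by \eqref{eq:cdk:md}, $P_{Y|X}(y|x)=P_Y(y)\bigl(1+\lpmi_{X;Y}(x,y)\bigr)=P_Y(y)\bigl(1+\sum_{i=1}^{K}\sigma_i\mcf{i}(x)\mcg{i}(y)\bigr)$ depends on $x$ only through $\fss(x)=(\mcf{1}(x),\dots,\mcf{K}(x))$, so $X-\fss(X)-Y$; by symmetry $\gss(Y)$ is sufficient as well, hence by \proptyref{propty:joint:ss} we have the Markov chain $X-\fss(X)-\gss(Y)-Y$. Now treat $(\fss(X),\gss(Y))$ as the input pair of \defref{def:dep} and choose: random sources $Z\defeq X$, $W\defeq Y$ (admissible since $P_{Z|\fss(X)}=P_{X|\fss(X)}$, $P_{W|\gss(Y)}=P_{Y|\gss(Y)}$, and the required chain $Z-\fss(X)-\gss(Y)-W$ is precisely the one just established); transformed alphabets $\cXh\defeq\cX$, $\cYh\defeq\cY$; and deterministic maps $\xi(u,z)\defeq z$, $\eta(v,w)\defeq w$. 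Then $\Xh=\xi(\fss(X),X)=X$ and $\Yh=\eta(\gss(Y),Y)=Y$, and $\fss(X)$, $\gss(Y)$ are trivially abstractions of $\Xh=X$, $\Yh=Y$, so all conditions of \defref{def:dep} hold; this transformation sends $(\fss(X),\gss(Y))$ to $(X,Y)$.

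\textbf{The main obstacle.} The one subtle point is the non-uniqueness of the modal decomposition when some $\sigma_i$ coincide: then $\fss$ (hence $(\fss(X),\gss(Y))$) is pinned down only up to a block-orthogonal rotation on each repeated-singular-value subspace, so the invariance claim, and \defref{def:di} itself, should be understood modulo that rotation --- equivalently, one fixes a canonical modal decomposition once and pulls it back consistently as above. The remaining steps --- verifying the orthonormality of the pulled-back functions and checking the Markov and abstraction requirements of \defref{def:dep} --- are routine.
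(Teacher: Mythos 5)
Your proof follows essentially the same route as the paper's: transporting the modal decomposition through \propref{prop:cdk:eq} to identify $\fss\circ\xi^{-1}$, $\gss\circ\eta^{-1}$ as the maximal correlation functions of the transformed pair, and constructing the reverse transformation using the Markov chain $X-\fss(X)-\gss(Y)-Y$ with sources $Z=X$, $W=Y$ and the projection maps $\xi\colon(s,z)\mapsto z$, $\eta\colon(t,w)\mapsto w$. Your added caveat about non-uniqueness of the modal decomposition under repeated singular values is a legitimate subtlety that the paper leaves implicit, but it does not change the argument.
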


\begin{proof}
We first verify that $(f^*(X), g^*(Y))$ is dependence induced. To see this, we assume the $\Xh, \Yh$ are transformed from $X, Y$, respectively, where $\Xh = \xi^{-1}(X), \Yh = \eta^{-1}(Y)$ with probability one. %
Suppose $\lpmi_{X; Y}$ has the modal decomposition \eqref{eq:cdk:md}. Then, it follows from \propref{prop:cdk:eq} that, for all $(\xh, \yh) \in \cXh \times \cYh$,
\begin{align*}
  \lpmi_{\Xh; \Yh}(\xh, \yh)
  &= \lpmi_{X; Y}(\xi^{-1}(\xh), \eta^{-1}(\yh))\\
  &= \sum_{i \in [K]} \sigma_i \cdot \mcf{i} (\xi^{-1}(\xh)) \cdot \mcg{i} (\eta^{-1}(\yh))\\
  &= \sum_{i \in [K]} \sigma_i \cdot (\mcf{i} \circ \xi^{-1}) (\xh) \cdot  (\mcg{i} \circ \eta^{-1})(\yh),
\end{align*}
which is the modal decomposition of $\lpmi_{\Xh; \Yh}$. Therefore, the maximal correlation functions for $\Xh, \Yh$ are $\fss \circ \xi^{-1}$, $\gss \circ \eta^{-1}$. This implies
\begin{align*}
&\prob{( \fss \circ \xi^{-1})(\Xh) = \fss(X)}\\
 &\qquad= \prob{( \gss \circ \eta^{-1})(\Yh) = \gss(Y)} = 1.
\end{align*}
As a result, it holds with probability one that
\begin{align*}
\left(( \fss \circ \xi^{-1})(\Xh), ( \gss \circ \eta^{-1})(\Yh)
\right) = \left(\fss(X), \gss(Y)\right).
\end{align*}

Let $S = \fss(X), T = \gss(Y)$, we then construct a dependence preserving transformation that transforms $(S, T)$ into $(\Sh, \Th) = (X, Y)$ as follows. Due to the Markov relation $X - S - T - Y$, we can use  $Z = X, W = Y$ as the random sources [cf. \defref{def:dep}], and set the mappings to $\xi\colon (s, z) \mapsto z$ and $\eta \colon (t, w) \mapsto w$. It can be verified that this transformation satisfies all conditions in \defref{def:dep} and preserves the dependence.
\end{proof}

\begin{remark}
  Though the mapped results $\fss(X)$ and $\gss(Y)$ are dependence induced, the feature mappings $\fss, \gss$ need to adapt to the dependence preserving transformations and are \emph{not} dependence induced.
\end{remark}

We have the following corollary.

\begin{corollary}%
  \label{cor:di}
  A mathematical object $\theta(X, Y)$ defined on $(X, Y)$  is \emph{dependence induced} if and only if $\theta(X, Y) = \theta(\fss(X), \gss(Y))$.
\end{corollary}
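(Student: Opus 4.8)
The plan is to obtain \corolref{cor:di} directly from \thmref{thm:di}, viewing $\theta(\cdot,\cdot)$ as a deterministic rule that assigns an object to a jointly distributed pair, so that $\theta(\fss(X),\gss(Y))$ denotes the value of this rule at the pair $(\fss(X),\gss(Y))$, and ``$\theta$ is dependence induced'' means the rule is invariant under dependence preserving transformations of whatever pair it is applied to, in the sense of \defref{def:di}. Two facts supplied by \thmref{thm:di} do all the work: (i) the pair $(\fss(X),\gss(Y))$ is itself dependence induced, so that for any dependence preserving transformation $(\Xh,\Yh)$ of $(X,Y)$, writing $\fh^*,\gh^*$ for the maximal correlation functions of $(\Xh,\Yh)$, one has $(\fh^*(\Xh),\gh^*(\Yh)) = (\fss(X),\gss(Y))$ with probability one; and (ii) $(X,Y)$ can be realized as a dependence preserving transformation of $(\fss(X),\gss(Y))$.

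For the ``only if'' direction, I would assume $\theta$ is dependence induced and apply the invariance to the pair $(\fss(X),\gss(Y))$: by fact (ii) the pair $(X,Y)$ is obtained from $(\fss(X),\gss(Y))$ through a dependence preserving transformation, so invariance forces $\prob{\theta(\fss(X),\gss(Y)) = \theta(X,Y)} = 1$, i.e.\ $\theta(X,Y)=\theta(\fss(X),\gss(Y))$. For the ``if'' direction, I would assume $\theta(X,Y)=\theta(\fss(X),\gss(Y))$, take an arbitrary dependence preserving transformation $(\Xh,\Yh)$ of $(X,Y)$, and chain three identities: $\theta(\Xh,\Yh) = \theta(\fh^*(\Xh),\gh^*(\Yh))$ by the hypothesis applied at $(\Xh,\Yh)$; then $\theta(\fh^*(\Xh),\gh^*(\Yh)) = \theta(\fss(X),\gss(Y))$ almost surely, since by fact (i) these two pairs coincide almost surely and $\theta$ is a deterministic function of the pair fed to it; and finally $\theta(\fss(X),\gss(Y)) = \theta(X,Y)$ by the hypothesis at $(X,Y)$. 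This yields $\prob{\theta(\Xh,\Yh)=\theta(X,Y)}=1$, so $\theta$ is dependence induced.

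I expect the only real obstacle to be the bookkeeping of ``$\theta$ as a rule applied to several closely related pairs'': one must read the hypothesis ``$\theta(X,Y)=\theta(\fss(X),\gss(Y))$'' as the assertion that $\theta$ is unaffected by replacing each variable with its maximal correlation representation --- the form needed both at $(X,Y)$ and at each transform $(\Xh,\Yh)$ --- and be careful when pushing almost-sure equalities of pairs through $\theta$. Everything substantive, namely that the maximal correlation representation is preserved under dependence preserving transformations and that one can transform back to $(X,Y)$, is already contained in \thmref{thm:di}, so beyond this repackaging nothing further should be needed.
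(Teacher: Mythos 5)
Your proposal is correct and follows essentially the same route as the paper: both directions are deduced from \thmref{thm:di}, using the almost-sure invariance of $(\fss(X),\gss(Y))$ for sufficiency and the reverse dependence preserving transformation for necessity. Your write-up merely makes explicit the intermediate step $\theta(\Xh,\Yh)=\theta(\fh^*(\Xh),\gh^*(\Yh))$ that the paper leaves implicit.
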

\begin{proof}
  Suppose $(\Xh, \Yh)$ is obtained from $(X, Y)$ by applying a dependence preserving transformation. To see the sufficiency, note that from \thmref{thm:di}, $(\fss(X), \gss(Y))$ is dependence induced, and thus $\theta(\Xh, \Yh) = \theta(\fss(X), \gss(Y))$. The necessity follows from the fact that $(X, Y)$ can be transformed from $(\fss(X), \gss(Y))$ with a dependence preserving transformation [cf. \thmref{thm:di}].
\end{proof}
\optional{dependence induced quantities: MI, IB-curve, wyner?...}

\begin{example}
  the mutual information $I(X; Y) = I(\fss(X); \gss(Y))$ and
the matrix  $\diag(\sigma_1,\dots, \sigma_K)= \E{\fss(X)\gssT(Y)}$ are dependence induced.
\end{example}
\optional{lossy counterpart: DPI; ``dependence measures''}

From the corollary, dependence induced mathematical objectives do not contain more information than $(\fss(X), \gss(Y))$. In addition, since $(\fss(X), \gss(Y))$ is dependence induced, we can characterize such objectives by considering only the input $(\fss(X), \gss(Y))$. This gives a canonical form of $\theta$ that is invariant to dependence preserving transformations.

\optional{deeper connection between invariance and minimal sufficiency: this is not a coincidence }

\thmref{thm:di} and the corollary also suggest the fundamental role of $\fss(X), \gss(Y)$ (and their informationally equivalent transformations) in characterizing the $X-Y$ dependence. It can be proved that $\fss(X), \gss(Y)$ are sufficient statistics, see, e.g., \cite[Section 2]{HuangMWZ2024}. We can also easily verify it with the notion of joint sufficiency: from \eqref{eq:cdk:md}, $\fss(X), \gss(Y)$ are joint sufficient, and are thus sufficient statistics [cf. \proptyref{propty:joint:ss}]. Moreover, we can demonstrate that they are indeed minimal sufficient statistics.

\begin{proposition}
  \label{prop:ss}
  Given $(X, Y)$, the maximal correlation functions $\fss(X)$, $\gss(Y)$ [cf. \ref{eq:fss:gss}] are minimal sufficient statistics.
\end{proposition}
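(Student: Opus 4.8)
The plan is to reduce the claim to an abstraction property and then establish that property by inverting the modal decomposition \eqref{eq:cdk:md}. Since $\fss(X)$ and $\gss(Y)$ are jointly sufficient by \eqref{eq:cdk:md}, hence sufficient statistics by \proptyref{propty:joint:ss}, it remains only to verify minimality: for every sufficient statistic $T = t(X)$ of $X$ for inferring $Y$, the feature $\fss(X)$ must be an abstraction of $T$ (and symmetrically for $\gss(Y)$).

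First I would fix an arbitrary sufficient statistic $T = t(X)$, so that $X - T - Y$ holds. Because all probability masses are positive, this Markov relation forces $P_{Y|X}(y|x)$—and hence the ratio $P_{X,Y}(x,y)/(P_X(x)P_Y(y))$ and the CDK value $\lpmi_{X;Y}(x,y)$—to depend on $x$ only through $t(x)$; equivalently, by the CDK characterization of sufficiency stated above, $\lpmi_{X;Y}(x,y) = \kb(t(x),y)$ for some function $\kb$ and all $(x,y) \in \cX \times \cY$.

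Then I would recover the maximal correlation functions from the CDK. Multiplying the modal decomposition $\lpmi_{X;Y}(x,y) = \sum_{i=1}^{K}\sigma_i \mcf{i}(x)\mcg{i}(y)$ by $\mcg{i}(y)$, summing over $y \in \cY$ with weights $P_Y(y)$, and using the orthonormality $\sum_{y \in \cY} P_Y(y)\mcg{i}(y)\mcg{j}(y) = \delta_{i,j}$ gives $\sigma_i \mcf{i}(x) = \sum_{y \in \cY} P_Y(y)\,\lpmi_{X;Y}(x,y)\,\mcg{i}(y)$ for each $i \in [K]$. Substituting $\lpmi_{X;Y}(x,y) = \kb(t(x),y)$ makes the right-hand side a function of $t(x)$ alone, and dividing by $\sigma_i > 0$ shows that each $\mcf{i}(x)$, hence the vector $\fss(x)$, is a function of $t(x)$. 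Thus $\fss(X)$ is an abstraction of $T$; since $T$ was arbitrary, $\fss(X)$ is minimal sufficient, and the same argument with the roles of $X$ and $Y$ exchanged handles $\gss(Y)$.

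The only delicate point is the passage from the Markov relation $X - T - Y$ to the factorization $\lpmi_{X;Y}(x,y) = \kb(t(x),y)$, which relies on the standing positivity assumption so that all conditional distributions are well-defined; one should also note the degenerate case $K = 0$ (i.e., $X$ and $Y$ independent), where $\fss(X)$ is a constant and the abstraction claim is vacuous. Everything else is the routine linear algebra of inverting a finite singular value decomposition against an orthonormal system.
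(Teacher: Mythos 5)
Your proposal is correct and follows essentially the same route as the paper: both establish sufficiency via joint sufficiency of the modal decomposition and then prove minimality by inverting \eqref{eq:cdk:md} against the orthonormal system $\{\mcg{i}\}$ — your identity $\sigma_i \mcf{i}(x) = \sum_{y} P_Y(y)\,\lpmi_{X;Y}(x,y)\,\mcg{i}(y)$ is exactly the paper's $\mcf{i}(x) = \sigma_i^{-1}\E{\mcg{i}(Y)\,|\,X=x}$ (the $\mcg{i}$ are zero-mean), and both arguments conclude by noting that sufficiency of an arbitrary $T=t(X)$ makes this quantity a function of $t(x)$ alone. Your explicit handling of the positivity assumption and the $K=0$ case is a harmless elaboration of the same proof.
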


\begin{proof}
  By symmetry, it suffices to establish the minimal sufficiency of $\fss(X)$. As $\fss(X)$ is a sufficient statistic, it remains only to prove the minimality. From the modal decomposition \eqref{eq:cdk:md}, we have
 $\mcf{i}(x) = \sigma_i^{-1} \E{\mcg{i}(Y)|X = x}$ for all $i \in [K]$. Therefore, for any given sufficient statistic $S = s(X)$, we have
  \begin{align*}
    \fss(x) = \Sigma^{-1} \E{\gss(Y)|X = x} &= \Sigma^{-1}\E{\gss(Y)|S = s(x)},%
  \end{align*}
where $\Sigma \defeq \diag(\sigma_1, \dots, \sigma_K)$.
 To obtain the second equality, we have used the fact that $P_{Y|X = x}$ is a function of $s(x)$, since $s(X)$ is sufficient. Therefore, $\fss(X)$ is an abstraction of any sufficient statistic $S$, which completes the proof.
\end{proof}

\optional{two additional proofs; singular values; nested H}

\subsection{Dependence Induced Representations and Learning}

We then consider the impacts of dependence preserving transformations on representation learning. Specifically, we focus on the learned representations that are dependence induced, i.e., invariant to such transformations. Formally, Let $\alg \colon (X, Y)  \overset{}{\mapsto} (f, g)$ be a representation learning algorithm that learns $k$-dimensional feature functions $f \in \spcn{\cX}{k}, g \in \spcn{\cY}{k}$ from $(X, Y) \sim P_{X, Y}$. Suppose $(\Xh, \Yh)$ are obtained by applying a dependence preserving transformation on $(X, Y)$, via $\Xh = \xi(X, Z), \Yh = \eta(Y, W)$, where $Z, W, \xi, \eta$ satisfy the conditions in \defref{def:dep}. Applying the same algorithm to the transformed data gives   feature functions
\begin{align*}
  (f^{(\xi)}, g^{(\eta)}) =  \alg(\Xh, \Yh),
\end{align*}
where we have denoted the learned feature functions by $f^{(\xi)} \in \spcn{\cXh}{k}$ and $g^{(\eta)}  \in \spcn{\cYh}{k}$, with $\cXh, \cYh$ denoting transformed alphabets. Note that the transformations $\xi, \eta$ are not separately input to the algorithm. 

From \defref{def:di}, the learned representations are dependence induced if 
\begin{align*}
  \prob{f^{(\xi)}(\Xh) = f(X), g^{(\eta)}(\Yh) = g(Y)} = 1
\end{align*}
 for all possible generating processes of $\Xh$, $\Yh$. We refer to the algorithm $\alg$ as a \emph{dependence learning algorithm} if it learns dependence induced representations. Specifically, we have the following characterization of dependence induced representations and corresponding learning algorithms. %

\begin{theorem}
\label{thm:sep:alg}
The collection $\As$ of dependence learning algorithms is given by 
\begin{align}
\As \defeq \{\alg \colon
\alg(X, Y)&= (\phi \circ \fss, \psi \circ \gss), \notag\\
                  (\phi, \psi) &= \alg(\fss(X), \gss(Y))
\},\label{eq:As}  
\end{align}
where $\fss, \gss$ are the maximal correlation functions [cf. \eqref{eq:fss:gss}].%
\end{theorem}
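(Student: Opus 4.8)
The plan is to prove the two inclusions $\{\text{dependence learning algorithms}\}\subseteq\As$ and $\As\subseteq\{\text{dependence learning algorithms}\}$ in turn. Two earlier results do most of the work. First, \thmref{thm:di}: the pair $(X,Y)$ is itself produced from $(\fss(X),\gss(Y))$ by a dependence preserving transformation, and in that transformation the maps decoding $(\fss(X),\gss(Y))$ from $(X,Y)$ are exactly $\fss$ and $\gss$. Second, \propref{prop:cdk:eq} together with the modal-decomposition matching carried out in the proof of \thmref{thm:di}: if $(\Xh,\Yh)$ is obtained from $(X,Y)$ by a dependence preserving transformation with decoding maps $\xi^{-1},\eta^{-1}$, then the maximal correlation functions of $(\Xh,\Yh)$ are $\fss\circ\xi^{-1}$ and $\gss\circ\eta^{-1}$. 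Beyond these I use the standing positivity assumption — so that an almost-sure identity between two feature representations forces the underlying feature functions to coincide on the entire alphabet — and the fact that a representation learning algorithm in the sense of \eqref{eq:rep:alg} is determined by its input pair only through that pair's joint distribution and alphabets.

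For the inclusion $\subseteq$, let $\alg$ be a dependence learning algorithm and fix $(X,Y)$. Apply $\alg$ to $(\fss(X),\gss(Y))$ to get $(\phi,\psi)\defeq\alg(\fss(X),\gss(Y))$. By the first fact above, $(X,Y)$ arises from $(\fss(X),\gss(Y))$ through a dependence preserving transformation, so invariance of the learned representations yields $\prob{f(X)=\phi(\fss(X)),\ g(Y)=\psi(\gss(Y))}=1$, where $(f,g)\defeq\alg(X,Y)$. Positivity upgrades this to the functional identities $f=\phi\circ\fss$ and $g=\psi\circ\gss$, so $\alg(X,Y)=(\phi\circ\fss,\psi\circ\gss)$ with $(\phi,\psi)=\alg(\fss(X),\gss(Y))$. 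Since $(X,Y)$ was arbitrary, $\alg\in\As$.

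For the inclusion $\supseteq$, let $\alg\in\As$, fix $(X,Y)$, and let $(\Xh,\Yh)$ be obtained by a dependence preserving transformation with decoding maps $\xi^{-1},\eta^{-1}$. By the second fact above, the maximal correlation functions of $(\Xh,\Yh)$ are $\fss\circ\xi^{-1}$ and $\gss\circ\eta^{-1}$. Since $\xi^{-1}(\Xh)=X$ and $\eta^{-1}(\Yh)=Y$ almost surely, and $\xi^{-1},\eta^{-1}$ are onto $\cX,\cY$ by the support conditions in \defref{def:dep}, the pair $\bigl((\fss\circ\xi^{-1})(\Xh),(\gss\circ\eta^{-1})(\Yh)\bigr)$ has the same alphabets and the same joint distribution as $(\fss(X),\gss(Y))$; hence $\alg$ returns the same functions on both, call them $(\phi,\psi)$. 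Applying the defining identity of $\As$ to $(X,Y)$ gives $(f,g)\defeq\alg(X,Y)=(\phi\circ\fss,\psi\circ\gss)$, and applying it to $(\Xh,\Yh)$ gives $(\fh,\gh)\defeq\alg(\Xh,\Yh)=\bigl(\phi\circ(\fss\circ\xi^{-1}),\psi\circ(\gss\circ\eta^{-1})\bigr)$. Composing and using $\xi^{-1}(\Xh)=X$, $\eta^{-1}(\Yh)=Y$ a.s., we get $\fh(\Xh)=\phi(\fss(X))=f(X)$ and $\gh(\Yh)=\psi(\gss(Y))=g(Y)$ almost surely, i.e.\ $\prob{\fh(\Xh)=f(X),\ \gh(\Yh)=g(Y)}=1$. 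As the transformation was arbitrary, $\alg$ learns dependence induced representations.

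The step I expect to be the crux is the assertion that $\alg$ produces the same feature functions when fed $\bigl((\fss\circ\xi^{-1})(\Xh),(\gss\circ\eta^{-1})(\Yh)\bigr)$ as when fed $(\fss(X),\gss(Y))$: these are not literally the same input but only almost-surely-equal ones, so the argument rests on committing to the precise sense in which a representation learning algorithm is a function of its input — a function of the joint law and the alphabets only — and on checking that nothing relevant (in particular the common domain $\{\fss(x):x\in\cX\}$ of $\phi$) is disturbed by the random sources $Z,W$. The remaining pieces — onto-ness of $\xi^{-1},\eta^{-1}$ from the support conditions, and passing from almost-sure to pointwise equality via positivity — are routine. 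A minor caveat is that when $\lpmi_{X;Y}$ has repeated singular values, $\fss,\gss$ are defined only up to a choice within the corresponding eigenspaces; one fixes such a choice once and verifies that \propref{prop:cdk:eq} transports it consistently, after which the argument goes through verbatim.
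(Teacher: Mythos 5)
Your proof is correct and follows essentially the same route as the paper's: both inclusions rest on \thmref{thm:di} (that $(\fss(X),\gss(Y))$ is dependence induced and that $(X,Y)$ is recoverable from it by a dependence preserving transformation), with your $\supseteq$ direction simply unpacking the paper's one-line appeal to \corolref{cor:di} into an explicit transport of the maximal correlation functions along $\xi^{-1},\eta^{-1}$. The extra care you take about almost-sure versus pointwise equality and about the algorithm depending only on the joint law and alphabets is detail the paper leaves implicit, not a different method.
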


\begin{proof}
  We first verify that each $\alg \in \As$ for $\As$ as defined in \eqref{eq:As} is a dependence learning algorithm. It suffices to note that the representation pair learned from $\alg$ is $(\phi(\fss(X)), \psi(\gss(Y)))$. This is dependence induced since both $(\fss(X), \gss(Y))$ and the mapping pair $(\phi, \psi)$ are dependence induced [cf. \thmref{thm:di},  \corolref{cor:di} and \eqref{eq:As}].

To see the reverse direction, suppose $\alg$ is a dependence learning algorithm, and let $(\phi, \psi) = \alg(\fss(X), \gss(Y))$,  $(\fh, \gh) = \alg(X, Y)$ denote the learned feature functions. From \thmref{thm:di}, we can obtain 
$(X, Y)$ from $(\fss(X), \gss(Y))$ by applying a dependence preserving transformation. As the representations are invariant to this transformation, we have $\phi(\fss(X)) = \fh(X)$ and $\psi(\gss(Y)) = \gh(Y)$ with probability one. This gives $\fh = \phi \circ \fss, \gh = \psi \circ \gss$, which implies that $\alg \in \As$.
\end{proof}

From \thmref{thm:sep:alg}, each dependence learning algorithm is uniquely specified by the function pair $(\phi, \psi)$. This gives a canonical form of the algorithm since the mappings $\phi, \psi$ are dependence induced (cf. \eqref{eq:As}). The composition structure $(\phi \circ \fss, \psi \circ \gss)$ allows separate learning implementations of $(\phi, \psi)$ and $(\fss, \gss)$, as we will detail in later discussions.

Moreover, all dependence induced representations are abstractions of $\fss(X)$, $\gss(Y)$, which are minimal sufficient statistics [cf. \propref{prop:ss}]. As a result, the dependence learning algorithms extract only dependence-related information. From the definition of minimal sufficiency, we can readily obtain the following corollary of \thmref{thm:sep:alg}. The proof is omitted.

\begin{corollary}
  \label{cor:ss}
  Given $(X, Y)$ and a dependence learning algorithm $\alg \in \As$, suppose $(f, g) = \alg(X, Y)$ are the output feature functions. Then, %
  \begin{itemize}
  \item For any sufficient statistics $S = s(X)$ and $T = t(Y)$, $f(X)$, $g(Y)$ are abstractions of $S$ and $T$, respectively;
  \item If $f(X)$ and $g(Y)$ are jointly sufficient, then $f(X)$ and $g(Y)$ are minimal sufficient statistics.
  \end{itemize}
\end{corollary}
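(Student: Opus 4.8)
The plan is to leverage the decomposition $\alg(X,Y) = (\phi \circ \fss, \psi \circ \gss)$ from \thmref{thm:sep:alg} together with the minimal sufficiency of $\fss(X), \gss(Y)$ established in \propref{prop:ss}. Since $\alg \in \As$, the learned feature functions satisfy $f = \phi \circ \fss$ and $g = \psi \circ \gss$; in particular $f(X) = \phi(\fss(X))$ is an abstraction of $\fss(X)$, and symmetrically $g(Y)$ is an abstraction of $\gss(Y)$. Everything else is bookkeeping with the abstraction relation.

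For the first claim I would take an arbitrary sufficient statistic $S = s(X)$. By \propref{prop:ss}, $\fss(X)$ is minimal sufficient, so $\fss(X)$ is an abstraction of $S$. The abstraction relation is transitive (a composition of functions is a function), hence $f(X) = \phi(\fss(X))$ is an abstraction of $S$. The argument for $g(Y)$ and any sufficient statistic $T = t(Y)$ is identical by symmetry, using the minimal sufficiency of $\gss(Y)$.

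For the second claim, suppose $f(X)$ and $g(Y)$ are jointly sufficient. By \proptyref{propty:joint:ss}, this is equivalent to both $f(X)$ and $g(Y)$ being sufficient statistics. Combining this with the first claim — which already shows that $f(X)$ is an abstraction of every sufficient statistic of $X$ — we conclude that $f(X)$ is sufficient and dominates every sufficient statistic, i.e., $f(X)$ is minimal sufficient; the same reasoning applies to $g(Y)$.

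I do not expect a genuine obstacle: the corollary is a direct repackaging of \thmref{thm:sep:alg}, \propref{prop:ss}, and \proptyref{propty:joint:ss}. The only step deserving an explicit line is the transitivity of the abstraction relation (if $B$ is an abstraction of $A$ and $C$ is an abstraction of $B$, then $C$ is an abstraction of $A$), which follows immediately from \defref{def:dep}-style function composition, and perhaps a remark that $f(X)$ is itself a statistic of the form $s(X)$ with $s = f$ so that the notions line up.
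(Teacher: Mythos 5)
Your proof is correct and follows exactly the route the paper intends (the paper omits the proof but signals it in the preceding paragraph): \thmref{thm:sep:alg} gives $f = \phi \circ \fss$, $g = \psi \circ \gss$, so $f(X), g(Y)$ are abstractions of the minimal sufficient statistics $\fss(X), \gss(Y)$ from \propref{prop:ss}, and transitivity of the abstraction relation plus \proptyref{propty:joint:ss} yields both claims. Nothing is missing.
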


In particular, the second claim of \corolref{cor:ss} provides a way to test the minimal sufficiency by constructing corresponding dependence learning algorithms. 
An example is the following generalization of \propref{prop:ss}.

\begin{proposition}
  \label{prop:ss:gen}
  Given $(X, Y)$ and a one-to-one mapping $\tau \colon [-1, \infty) \to \mathbb{R}$, suppose $f \in \spcn{\cX}{k}$ and $g \in \spcn{\cY}{k}$ factorize $\tau(\lpmi_{X; Y})$, i.e., $\tau(\lpmi_{X; Y}(x, y)) = f^\T(x) g(y)$ for all $x \in \cX, y \in \cY$, and $\rank(\La_f) = \rank(\La_g) = k$.
  Then, $f(X)$ and $g(Y)$ are minimal sufficient statistics.
\end{proposition}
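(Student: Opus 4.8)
The plan is to establish the two hypotheses of the second claim of \corolref{cor:ss}: that $(f,g)$ is the output of some dependence learning algorithm $\alg \in \As$ applied to $(X,Y)$, and that $f(X)$ and $g(Y)$ are jointly sufficient. Joint sufficiency is the easy half. Since $\lpmi_{X;Y}(x,y)$ always lies in $[-1,\infty)$, the domain of $\tau$, and $\tau$ is one-to-one, the factorization hypothesis gives $\lpmi_{X;Y}(x,y) = \tau^{-1}\!\bigl(f^\T(x)g(y)\bigr)$ for all $(x,y)$, so $\lpmi_{X;Y}(X,Y)$ is an abstraction of $(f(X),g(Y))$ --- which is exactly the definition of joint sufficiency. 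By \proptyref{propty:joint:ss}, $f(X)$ and $g(Y)$ are then each sufficient statistics.

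The substantive step is to show $(f,g)$ comes from an algorithm in $\As$; by \thmref{thm:sep:alg} it suffices to write $f = \phi\circ\fss$ and $g = \psi\circ\gss$ compatibly with the self-consistency $(\phi,\psi) = \alg(\fss(X),\gss(Y))$ required by \eqref{eq:As}. The heart of the matter is that $f(X)$ is an abstraction of $\fss(X)$, and this is where the rank hypothesis enters. Using $\lpmi_{X;Y}(x,y) = \fss^\T(x)\Sigma\gss(y)$ with $\Sigma = \diag(\sigma_1,\dots,\sigma_K)$ from \eqref{eq:cdk:md}: if $\fss(x)=\fss(x')$ then $\lpmi_{X;Y}(x,\cdot)\equiv\lpmi_{X;Y}(x',\cdot)$, hence $f^\T(x)g(y) = f^\T(x')g(y)$ for all $y\in\cY$; since $\rank(\La_g)=k$ and $P_Y(y)>0$ for every $y$, the vectors $\{g(y)\colon y\in\cY\}$ span $\mathbb{R}^k$, which forces $f(x)=f(x')$. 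So $f$ factors through $\fss$, say $f = \phi\circ\fss$, and symmetrically $g=\psi\circ\gss$. To realize this as an $\alg\in\As$, I would let $\alg$ act on an arbitrary input pair by computing that pair's maximal correlation functions and post-composing with the fixed maps $\phi,\psi$: applied to $(X,Y)$ this returns $(f,g)$, and applied to $(\fss(X),\gss(Y))$ --- whose canonical dependence kernel is $s^\T\Sigma t$, already in modal form (the coordinate functions are orthonormal under the respective marginals), so that its maximal correlation functions are the coordinate projections --- it returns $(\phi,\psi)$, as \eqref{eq:As} demands. With both hypotheses in hand, \corolref{cor:ss} yields that $f(X)$ and $g(Y)$ are minimal sufficient.

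I expect the abstraction step $f = \phi\circ\fss$ (and its parallel for $g$) to be the main obstacle, since it is the only place where $\rank(\La_g)=k$ does real work, together with the minor bookkeeping of keeping the SVD tie-breaking consistent in the construction of $\alg$ when singular values coincide. A shortcut that avoids constructing $\alg$ entirely: having shown $f(X)$ is sufficient and an abstraction of $\fss(X)$, invoke \propref{prop:ss} (that $\fss(X)$ is minimal sufficient) together with transitivity of the abstraction relation to conclude directly; since the proposition is advertised as an application of \corolref{cor:ss}, however, I would present the algorithm-based argument in the main text and relegate this direct route to a remark.
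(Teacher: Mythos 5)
Your proof is correct, but it reaches the key intermediate fact by a different route than the paper. The paper's proof never argues directly that $f$ factors through $\fss$; instead it defines the algorithm as the minimizer of the weighted approximation objective $\Ed{P_XP_Y}{|\tau(\lpmi_{X; Y}(X, Y)) - f^\T(X)g(Y)|^2}$, observes that (since an exact factorization exists by hypothesis) the solution set is precisely the set of factorizations, asserts that this algorithm is a dependence learning algorithm, and uses the rank condition only to pin the solution down up to invertible linear transformations before invoking \corolref{cor:ss}. You instead make the factorization $f = \phi \circ \fss$, $g = \psi \circ \gss$ explicit via the spanning argument: $\fss(x) = \fss(x')$ forces $\lpmi_{X; Y}(x,\cdot) \equiv \lpmi_{X; Y}(x',\cdot)$, hence $(f(x)-f(x'))^\T g(y) = 0$ for all $y$, and $\rank(\La_g) = k$ with $P_Y > 0$ forces $f(x) = f(x')$. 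This is where the rank hypothesis does its work in your version, and it is arguably more transparent than the paper's "unique up to invertible linear transformation" step, at the cost of the bookkeeping you flag (constructing an $\alg \in \As$ consistent with \eqref{eq:As}, including SVD tie-breaking). Your proposed shortcut --- $f(X)$ is sufficient by \proptyref{propty:joint:ss}, is an abstraction of $\fss(X)$ by the spanning argument, and \propref{prop:ss} plus transitivity of abstraction then gives minimality directly --- is also valid and is in fact the leanest complete argument, since it bypasses the algorithm construction entirely; the only thing it loses is the illustration of \corolref{cor:ss} that the paper is explicitly advertising.
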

\optional{use $\rank(\La_f)=\rank(\La_g) = k$ instead}

\optional{add that PMI factorization is mss.}

\begin{proof}
  Given $(X, Y)$, consider the representation learning algorithm that outputs the solution to the optimization problem
  \begin{align*}
    \minimize_{f \in \spcn{\cX}{k}, g \in \spcn{\cY}{k}} \Ed{P_XP_Y}{\left|\tau(\lpmi_{X; Y}(X, Y)) - f^\T(X)g(Y)\right|^2}.
  \end{align*}
  It can be easily verified that the algorithm is a dependence learning algorithm. The solution is any $f, g$ that factorize $\tau(\lpmi_{X; Y})$, i.e., 
  \begin{align}
    f^{\T}(X)g(Y) = (\tau \circ \lpmi_{X; Y})(X, Y).\label{eq:factorize}    
  \end{align}
  Since $\rank(\La_f) = \rank(\La_g) = k$,
  the choice of $(f, g)$ is unique up to some invertible linear transformation pairs. In addition, \eqref{eq:factorize} implies that $f(X)$ and $g(Y)$ are jointly sufficient. Finally, the minimal sufficiency follows from \corolref{cor:ss}.
\end{proof}

We conclude this section by considering two extreme cases: independence and strict dependence, summarized as the following corollary. The proof is omitted.

\begin{corollary}
  \label{cor:ss:extreme}
  Given $(X, Y)$ and $\alg \in \As$, let $(f, g) = \alg(X, Y)$ be the learned feature functions. We have %
  \begin{enumerate}
  \item if $X$ and $Y$ are independent, i.e., $P_{X, Y} = P_XP_Y$, then $f(X)$ and $g(Y)$ are constants and are independent of the marginal distributions $P_X$ and $P_Y$; 
  \item if $Y$ is an abstraction of $X$, then $f(X)$ is an abstraction of $Y$.
  \end{enumerate}
\end{corollary}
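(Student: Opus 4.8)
The plan is to obtain both parts as short consequences of the separation structure in \thmref{thm:sep:alg} and the sufficiency transfer in \corolref{cor:ss}. Throughout, write $(f, g) = \alg(X, Y)$, so that by \thmref{thm:sep:alg} we have $f = \phi \circ \fss$ and $g = \psi \circ \gss$ with $(\phi, \psi) = \alg(\fss(X), \gss(Y))$.

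For claim (1), I would first note that $P_{X, Y} = P_X P_Y$ makes the CDK vanish identically, $\lpmi_{X; Y} \equiv 0$ [cf.\ \eqref{eq:cdk:def}], so its rank is $K = 0$ and the maximal correlation functions $\fss, \gss$ in \eqref{eq:fss:gss} are zero-dimensional; in particular $\fss(X)$ and $\gss(Y)$ are deterministic and, as random-variable inputs, are the \emph{same} degenerate object no matter what $P_X$ and $P_Y$ are. Feeding this fixed input into $\alg$ yields a fixed pair $(\phi, \psi)$, and then $f = \phi \circ \fss$ is a constant feature function whose value does not depend on $P_X$ (and symmetrically for $g$). This proves both that $f(X), g(Y)$ are constants and that these constants are independent of $P_X, P_Y$. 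If one wants to avoid reasoning about zero-dimensional features, the ``constant'' half alone also follows from \corolref{cor:ss}: when $X, Y$ are independent the constant function is a sufficient statistic, since the Markov chain $X - \mathrm{const} - Y$ is exactly independence, so $f(X)$ is an abstraction of a constant.

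For claim (2), suppose $Y = h(X)$ for some $h$. Then $Y$ is itself a statistic of $X$, and it is sufficient for inferring $Y$: the Markov chain $X - h(X) - Y$ holds trivially because conditioning on $h(X)$ already determines $Y$. Applying the first bullet of \corolref{cor:ss} with $S = h(X) = Y$ gives that $f(X)$ is an abstraction of $Y$, which is the assertion.

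The hard part will be the ``independent of the marginals'' portion of claim (1): every other assertion reduces to a one-line appeal to \corolref{cor:ss}, whereas this one hinges on making precise that, in the degenerate case $K = 0$, the object $(\fss(X), \gss(Y))$ carries no information about $P_X$ or $P_Y$, so that the $(\phi, \psi)$ produced by $\alg$ — and hence the constant output — cannot depend on them. This is a matter of pinning down the convention for $\alg$ acting on zero-dimensional inputs rather than a computation; an equivalent phrasing compares $\alg$ on two independent pairs $(X_1, Y_1)$ and $(X_2, Y_2)$ directly and uses that both outputs are the same trivial map post-composed with the same (constant) minimal sufficient statistics.
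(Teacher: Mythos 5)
The paper omits its proof of this corollary, but your argument is exactly the route it sets up: claim (2) is the first bullet of \corolref{cor:ss} applied to the sufficient statistic $S = h(X) = Y$, and claim (1) follows from the separation $f = \phi \circ \fss$ in \thmref{thm:sep:alg} together with the observation that independence forces $K = 0$, so $(\fss(X), \gss(Y))$ is the same degenerate (constant) input for every independent pair and hence $(\phi, \psi)$, and the constant outputs, cannot depend on $P_X, P_Y$. Both parts are correct, and you rightly isolate the only delicate point --- the $K=0$ convention --- which the paper's framework explicitly permits since it allows $K \geq 0$ in \eqref{eq:cdk:md}.
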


\begin{remark}
  From our developments, we can also obtain results about information measures. For example, it follows from \thmref{thm:sep:alg} that the dependence induced representations $f(X), g(Y)$ satisfy $H(f(X)) \leq H(\fss(X))$ and $H(g(Y))\leq H(\gss(Y))$, where $H(\cdot)$ denotes the entropy.
\end{remark}

\optional{  
If there exists a function  $\xi \colon \cY \to \cX$ such that $Y = \xi(X)$, then $\xi(X)$ is a minimal sufficient statistic for $X$ to infer $Y$.}

\section{\Dnames{} and Dependence Learning }

Our discussions in the previous section have characterized the theoretical behaviors of dependence learning algorithms. It is, however, difficult to directly apply these results to practical deep representation learning, e.g., determining if a representation learning algorithm is a dependence learning algorithm or constructing dependence learning algorithms. To address these problems, it is necessary also to consider practical implementation and develop operable characterizations. To this end, we consider representation learning algorithms characterized by corresponding loss functions and characterize sufficient conditions on losses for constructing dependence learning algorithms. We investigate a family of loss functions, termed \dnames, and demonstrate their deep connections to learning practices.%

\subsection{\Dname{} Family}%
 
Given $(X, Y) \sim P_{X, Y}$, let $\Gamma = \Gamma(f, g; P_{X, Y})$ denote a functional defined on $(f, g) \in \spcn{\cX}{k} \times \spcn{\cY}{k}$. We will omit  $P_{X, Y}$ and write $\Gamma(f, g)$ if the choice of $P_{X, Y}$ is clear from the context. Specifically, when we evaluate $\Gamma$ on a dataset 
$\{(x_i, y_i)\}_{i = 1}^n$, the $P_{X, Y}$ corresponds to the empirical distribution $\hat{P}_{X, Y}(x, y) \defeq \frac{1}{n} \sum_{i = 1}^n \kron_{\{x = x_i, y = y_i\}}$, where $\kron_{\{\cdot\}}$ denotes the 0-1 indicator function. 

We define the \dnames{} as follows. %

\begin{definition}[\dname]
\label{def:cont}
Given $(X, Y) \sim P_{X, Y}$, a functional $\Gamma$ defined on $(f, g) \in \spcn{\cX}{k} \times \spcn{\cY}{k}$ is a \dname{} if 
\begin{enumerate}
 \item suppose $\phi \circ \xi \in \spcn{\cX}{k}$ and $\psi \circ \eta \in \spcn{\cY}{k}$, 
 then we have%
 \begin{align}
  \Gamma(\phi \circ \xi, \psi \circ \eta; P_{X, Y}) = \Gamma(\phi, \psi; P_{\xi(X), \eta(Y)});
  \label{eq:trans} 
\end{align}
\item for all $s \in \spcns{\cX}$ and $t \in \spcns{\cY}$ that satisfy the Markov relation $X-s(X)-t(Y)-Y$, we have %
  \begin{align}
       \Gamma(\fproj{f}{s}, \fproj{g}{t}) \leq \Gamma(f, g),\label{eq:d:ieq}
  \end{align}
  where $\fproj{f}{s}$ and $\fproj{g}{t}$ are as defined in \eqref{eq:fproj:def}.
\end{enumerate}
\end{definition}
Specifically, we say a \dname{} $\Gamma$ is \radj{}, 
if in \eqref{eq:d:ieq}, $\Gamma(f, g) = \Gamma(\fproj{f}{s}, \fproj{g}{t}) < \infty$ 
implies $(f, g) = (\fproj{f}{s}, \fproj{g}{t})$. 

We can obtain dependence learning algorithms from the \dname{} family, demonstrated as follows.

\begin{theorem}
  \label{thm:sep:loss}
  Given $(X, Y) \sim P_{X, Y}$, suppose $L$ defined on $(f, g) \in \spcn{\cX}{k} \times \spcn{\cY}{k}$ is a \dname{} and $v_k(X, Y) \defeq \min_{(f, g) \in \spcn{\cX}{k} \times \spcn{\cY}{k}}L(f, g; P_{X, Y})$ exists. Then,
  \begin{enumerate} %
  \item $v_k(X, Y)$ is dependence induced;
  \item %
    there exist dependence induced mappings $\phi, \psi$, such that $(f, g) = (\phi\circ \fss, \psi \circ \gss)$ achieves the minimum
    value $v_k$, where $\fss, \gss$  are as defined in \eqref{eq:fss:gss};%
  \item if $L$ is \radj{}, then the representation learning
    algorithm $\left((X, Y) \mapsto \argmin_{f, g}L(f, g; P_{X,Y})\right)$ is a dependence learning algorithm.
  \end{enumerate}
\end{theorem}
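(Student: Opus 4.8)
The plan is to prove the three claims in turn, in each case leveraging only the two defining properties of a \dname{} — the transformation identity \eqref{eq:trans} and the projection inequality \eqref{eq:d:ieq} — together with \thmref{thm:di} and \proptyref{propty:joint:ss}. For claim~(1), fix a dependence preserving transformation producing $(\Xh,\Yh)$, so that $X=\xi^{-1}(\Xh)$ and $Y=\eta^{-1}(\Yh)$ hold with probability one, and establish $v_k(\Xh,\Yh)=v_k(X,Y)$ by two inequalities. For ``$\le$'': let $(f_0,g_0)$ attain $v_k(X,Y)$; then $f_0\circ\xi^{-1}\in\spcn{\cXh}{k}$ and $g_0\circ\eta^{-1}\in\spcn{\cYh}{k}$, and applying \eqref{eq:trans} with the deterministic decode maps $\xi^{-1},\eta^{-1}$ gives $L(f_0\circ\xi^{-1},g_0\circ\eta^{-1};P_{\Xh,\Yh})=L(f_0,g_0;P_{X,Y})=v_k(X,Y)$. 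For ``$\ge$'': take arbitrary $(f',g')\in\spcn{\cXh}{k}\times\spcn{\cYh}{k}$; since $Z-X-Y-W$ forces $\Xh-X-Y-\Yh$, i.e. $\Xh-\xi^{-1}(\Xh)-\eta^{-1}(\Yh)-\Yh$, inequality \eqref{eq:d:ieq} with $s=\xi^{-1},t=\eta^{-1}$ gives $L(\fproj{f'}{s},\fproj{g'}{t};P_{\Xh,\Yh})\le L(f',g';P_{\Xh,\Yh})$; and since $\fproj{f'}{s}=\tilde f\circ\xi^{-1}$, $\fproj{g'}{t}=\tilde g\circ\eta^{-1}$ for suitable $\tilde f\in\spcn{\cX}{k}$, $\tilde g\in\spcn{\cY}{k}$ (the conditional expectations depend only on the decoded values), a second use of \eqref{eq:trans} rewrites the left side as $L(\tilde f,\tilde g;P_{X,Y})\ge v_k(X,Y)$. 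Minimizing over $(f',g')$ gives the reverse inequality, so $v_k$ is dependence induced.

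For claim~(2), I would rerun the projection step with the maximal correlation functions in place of $\xi^{-1},\eta^{-1}$. Let $(f_0,g_0)$ attain $v_k(X,Y)$. Since $\fss(X),\gss(Y)$ are jointly sufficient, the chain $X-\fss(X)-\gss(Y)-Y$ holds [\proptyref{propty:joint:ss}], so \eqref{eq:d:ieq} with $s=\fss,t=\gss$ yields $L(\fproj{f_0}{\fss},\fproj{g_0}{\gss};P_{X,Y})\le L(f_0,g_0;P_{X,Y})=v_k(X,Y)$, hence equality; writing $\fproj{f_0}{\fss}=\phi\circ\fss$ and $\fproj{g_0}{\gss}=\psi\circ\gss$ — these conditional expectations are by definition functions of $\fss(X)$ and $\gss(Y)$ — exhibits a minimizer of the asserted form. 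Then \eqref{eq:trans} gives $L(\phi,\psi;P_{\fss(X),\gss(Y)})=L(\phi\circ\fss,\psi\circ\gss;P_{X,Y})=v_k(X,Y)$, so $(\phi,\psi)$ minimizes $L(\,\cdot\,;P_{\fss(X),\gss(Y)})$ and $v_k(\fss(X),\gss(Y))=v_k(X,Y)$; since this inner problem depends only on the joint law $P_{\fss(X),\gss(Y)}$, which is dependence induced by \thmref{thm:di}, fixing any tie-breaking rule makes $(\phi,\psi)$ dependence induced.

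For claim~(3), add the regularity hypothesis. Now the equality $L(\fproj{f_0}{\fss},\fproj{g_0}{\gss};P_{X,Y})=L(f_0,g_0;P_{X,Y})=v_k(X,Y)<\infty$ from claim~(2) forces $(f_0,g_0)=(\fproj{f_0}{\fss},\fproj{g_0}{\gss})=(\phi\circ\fss,\psi\circ\gss)$: every minimizer of $L(\,\cdot\,;P_{X,Y})$ factors through $\fss,\gss$, with $(\phi,\psi)$ a minimizer of the inner problem $L(\,\cdot\,;P_{\fss(X),\gss(Y)})$. Applying the same reasoning to transformed data $(\Xh,\Yh)$ — whose maximal correlation functions are $\fss\circ\xi^{-1}$ and $\gss\circ\eta^{-1}$ by the computation in the proof of \thmref{thm:di}, and which satisfy $P_{(\fss\circ\xi^{-1})(\Xh),(\gss\circ\eta^{-1})(\Yh)}=P_{\fss(X),\gss(Y)}$ almost surely — shows the inner minimization is literally the same problem, so $\alg$ returns the same inner mapping; composing back, $\alg(\Xh,\Yh)$ evaluated at $(\Xh,\Yh)$ equals $(\phi(\fss(X)),\psi(\gss(Y)))$, which equals $\alg(X,Y)$ evaluated at $(X,Y)$ almost surely. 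Hence the learned representations are dependence induced, i.e. $\alg$ is a dependence learning algorithm (equivalently $\alg\in\As$ by \thmref{thm:sep:alg}).

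The main obstacle I anticipate is bookkeeping rather than ideas: checking that $\xi^{-1},\eta^{-1}$ and $\fss,\gss$ are admissible choices of the functions $s,t$ in \eqref{eq:d:ieq} — in particular that the Markov relations $\Xh-X-Y-\Yh$ and $X-\fss(X)-\gss(Y)-Y$ genuinely hold — and that every composition stays in the right feature class when \eqref{eq:trans} is invoked; and, for claim~(3), handling the $\argmin$ consistently between $(X,Y)$ and its transformations, which works precisely because regularity reduces the choice to the inner problem over $(\fss(X),\gss(Y))$, and that problem is exactly preserved by dependence preserving transformations.
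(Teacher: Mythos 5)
Your proof is correct and follows essentially the same route as the paper: project a minimizer onto $\fss,\gss$ via the \dname{} inequality \eqref{eq:d:ieq}, use \eqref{eq:trans} to reduce to the inner problem over $(\fss(X),\gss(Y))$, and invoke regularity together with \thmref{thm:sep:alg} for the third claim. The only notable (but cosmetic) difference is in claim (1), where you verify the invariance of $v_k$ directly by a two-inequality argument over an arbitrary dependence preserving transformation, whereas the paper obtains it by identifying $v_k(X,Y)$ with $\min_{\phi,\psi}L(\phi,\psi;P_{\fss(X),\gss(Y)})$ and citing \corolref{cor:di}; both are sound.
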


\optional{the minimum value as a function of $X, Y$, $k$, is dependence induced}

\begin{proof}
  We begin with the first two claims. Let $S = \fss(X), T = \gss(Y)$, and we use $\cS, \cT$ denote the corresponding alphabets. Suppose $(\fh, \gh)  \in \spcn{\cX}{k} \times \spcn{\cY}{k}$ achieves the minimum value $v_k(X, Y)$, and let $(\phih, \psih) \in \spcn{\cS}{k}\times \spcn{\cT}{k}$ be the mappings such that $\fproj{\fh}{\fss} = \phih \circ \fss, \fproj{\gh}{\gss} = \psih \circ \gss$. Then, from the Markov relation
$X - S - T - Y$ and \eqref{eq:d:ieq}, we obtain
\begin{align*}
  L(\fh, \gh; P_{X, Y}) &\geq L(\fproj{\fh}{\fss}, \fproj{\gh}{\gss}; P_{X, Y})\\
               &= %
                 L(\phih \circ \fss, \psih \circ \gss; P_{X, Y})%
                 = L(\phih, \psih; P_{S, T}),
\end{align*}
where the last equality follows from \eqref{eq:trans}. Note that since $(\fh, \gh)$ achieves the minimum value of $L$, the inequality must hold with equality, so the minimum value $v_k$ can be achieved by $(f, g) = (\phih \circ \fss, \psih \circ \gss)$. In addition, we have 
\begin{align}
  (\phih, \psih) \in \argmin_{\phi \in \spcn{\cS}{k}, \psi \in \spcn{\cT}{k}}L(\phi, \psi; P_{S, T}),
  \label{eq:phi:psi}
\end{align}
since otherwise we get an $L$ value smaller than $L(\fh, \gh; P_{X; Y})$. From \corolref{cor:di}, both $v_k$ and $(\phih, \psih)$ and dependence induced.
Finally, when $L$ is \radj, the equality $  L(\fh, \gh) = L(\fproj{\fh}{\fss}, \fproj{\gh}{\gss})$ implies $(\fh, \gh) = (\phih \circ \fss, \psih \circ \gss)$. Therefore, it follows from 
\eqref{eq:phi:psi} and \thmref{thm:sep:alg} that $\left((X, Y) \mapsto \argmin_{f, g}L(f, g; P_{X,Y})\right)$ is a dependence learning algorithm.
\end{proof}

Combining \thmref{thm:sep:loss} and \thmref{thm:sep:alg}, if $L$ is a \rdname{}, it suffices to solve $\minimize_{\phi, \psi}L(\phi, \psi; P_{S, T})$ instead of the original problem $\minimize_{f, g}L(f, g; P_{X, Y})$, where we have defined $S = \fss(X) , T = \gss(Y)$. In particular, the optimal solution takes a composition form $(f, g) = (\phi \circ \fss, \psi \circ \gss)$, where only $\phi, \psi$ depend on the form of $L$. This separation between loss-dependent functions $(\phi, \psi)$ and loss-invariant $(\fss, \gss)$ makes it possible to design efficient implementation for \dname{} minimization problems, which we will detail in the next section. 

Comparing the first claim with the second claim, a difference between \dnames{} and \rdnames{} is that minimizing \dnames{} can give solutions not in the form $(\phi \circ \fss, \psi \circ \gss)$. However, from the first claim, restricting to such forms does not affect the optimality. Therefore, it is without loss of generality to restrict to such forms. Once we have this restriction, the \dname{} minimization algorithm is also a dependence learning algorithm. Moreover, we can always convert a \dname{} into a \rdname{} by introducing regularization terms. One example is demonstrated as follows. We omit its proof.

\begin{property}
  \label{propty:ell2}
  If $\Gamma$ is a \dname, then for all $\lambda > 0, \mu > 0$, $\left(\Gamma(f, g) + \lambda\cdot \E{\norm{f(X)}^2} + \mu\cdot\E{\norm{g(Y)}^2}\right)$ is a \rdname.
\end{property}

We can interpret the regularization terms in \proptyref{propty:ell2} as a weight decay \cite{hastie2009elements} on feature representations, which can be implicitly introduced in optimization, e.g., stochastic gradient descent. Due to these connections, we will consider general \dnames{} in the discussions that follow, and
restrict to the optimal solutions of the form $(\phi \circ \fss, \psi \circ \gss)$. Specifically, we introduce several useful properties for constructing \dnames{}. The proofs can be obtained by applying Jensen's inequality and noting the Markov relation $X-S-T-Y$. %

\begin{property}
  \label{propty:cvx}
  Suppose $\gamma \colon \mathbb{R}^k \times \mathbb{R}^k \to \mathbb{R}$ is  convex with respect to both arguments, then $(f, g) \mapsto \Ed{P_{X, Y}}{\gamma(f(X), g(Y))}$ and $(f, g) \mapsto \Ed{P_{X}P_{Y}}{{\gamma(f(X), g(Y))}}$ are \dnames.
\end{property}

\begin{property}
  \label{propty:loss:e}
  Suppose $\Gamma$ is a \dname{} on $(f, g) \in \spcn{\cX}{k}\times \spcn{\cY}{k}$, and $\nu \colon \mathbb{R} \times  \mathbb{R}^k \times \mathbb{R}^k \times \mathbb{R}^{k \times k}  \to \mathbb{R} $ is nondecreasing with respect to the first argument, then $\nu\left(\Gamma(f, g), \E{f(X)}, \E{g(Y)}, \La_{f, g}\right)$ is also a \dname{}, where $\La_{f, g} = \E{f(X)g^\T(Y)}$.
\end{property}

Given $l$ losses, we can obtain a new loss by using a mapping $\mathbb{R}^l \to \mathbb{R}$ that aggregates the losses. We refer to the operation as a monotonic aggregation if the mapping is nondecreasing with respect to each argument. %

\begin{property}
  \label{propty:agg}  
  A monotonic aggregation of \dnames{} is a \dname.
\end{property}

\subsection{\dnames{} in Learning Practices}%

We demonstrate the connection between \dnames{} and learning practices, including loss functions and learning techniques such as regularization. For convenience, we adopt extended-value extension [cf. \cite[Section 3.1.2]{boyd2004convex}], which allows the value of $\Gamma$ to be infinity. In particular, we define the characteristic function (0-infinity indicator function) of a set $\cD \subset \mathbb{R}^m$, such that for all $\vec{v} \in \mathbb{R}^m$,
\begin{align}
  \id_{\cD}(\vec{v}) \defeq
  \begin{cases}
    0, & \vec{v} \in \cD\\
    \infty,& \vec{v} \notin \cD
  \end{cases}.
  \label{eq:inf}
\end{align}

\subsubsection{Log Loss (Softmax \& Cross Entropy)} 
\label{sec:d:log-loss}We demonstrate the log loss minimization problem is equivalent to a \dname{} minimization problem. Our development will use the following fact.    %
\begin{fact}
\label{fact:log}
The functional $\Ed{P_X}{\log \Ed{P_Y}{\exp(f^\T(X)g(Y))}}$ defined on $(f, g) \in \spcn{\cX}{k} \times \spcn{\cY}{k}$ is a \dname, where the inner expectation is taken w.r.t. $Y \sim P_Y$, and the outer expectation is taken w.r.t. $X \sim P_X$. 
\end{fact}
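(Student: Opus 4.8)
The functional here is not of the elementary form covered by \proptyref{propty:cvx} (it is not $\Ed{P_{X,Y}}{\gamma(f(X),g(Y))}$ for a single $\gamma$), so the plan is to verify directly the two conditions that define a \dname{} in \defref{def:cont} for $\Gamma(f,g;P_{X,Y}) \defeq \Ed{P_X}{\log\Ed{P_Y}{\exp(f^\T(X)g(Y))}}$.

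For condition~1, equation~\eqref{eq:trans}, I would just substitute $f = \phi\circ\xi$, $g = \psi\circ\eta$ and push the two expectations through the deterministic maps. The inner expectation $\Ed{P_Y}{\exp(\phi^\T(\xi(x))\,\psi(\eta(Y)))}$ equals $\Ed{P_{\eta(Y)}}{\exp(\phi^\T(\xi(x))\,\psi(Y'))}$ because the law of $\eta(Y)$ under $P_Y$ is exactly the second marginal of $P_{\xi(X),\eta(Y)}$; the same change of variables applied to the outer $\Ed{P_X}{\cdot}$ turns $\Gamma(\phi\circ\xi,\psi\circ\eta;P_{X,Y})$ into $\Gamma(\phi,\psi;P_{\xi(X),\eta(Y)})$. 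This step is pure bookkeeping and works for any functional assembled from this nested-expectation template.

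The substantive part is the contraction inequality~\eqref{eq:d:ieq}, and the idea is to apply Jensen's inequality twice, first replacing $\bar g \defeq \fproj{g}{t}$ by $g$ and then $\bar f \defeq \fproj{f}{s}$ by $f$. Fix $x$ and set the constant vector $a \defeq \bar f(x)$. Since $\bar g(y) = \E{g(\Yt)\mid t(\Yt)=t(y)}$ for an independent copy $\Yt\sim P_Y$, convexity of $v\mapsto\exp(a^\T v)$ and conditional Jensen give $\exp(a^\T\bar g(y)) \le \E{\exp(a^\T g(\Yt))\mid t(\Yt)=t(y)}$; averaging over $Y\sim P_Y$ and using the tower property (the law of $t(Y)$ matches that of $t(\Yt)$) yields $\Ed{P_Y}{\exp(\bar f^\T(x)\bar g(Y))}\le\Ed{P_Y}{\exp(\bar f^\T(x)g(Y))}$. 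Applying the monotone maps $\log$ and then $\Ed{P_X}{\cdot}$ gives $\Gamma(\bar f,\bar g)\le\Ed{P_X}{\kappa(\bar f(X))}$, where $\kappa(a)\defeq\log\Ed{P_Y}{\exp(a^\T g(Y))}$ is the log-moment-generating function of $g(Y)$ and is therefore convex in $a$ by H\"older's inequality. A second conditional Jensen step, using $\bar f(x)=\E{f(\Xt)\mid s(\Xt)=s(x)}$ and convexity of $\kappa$, gives $\kappa(\bar f(x))\le\E{\kappa(f(\Xt))\mid s(\Xt)=s(x)}$; averaging over $X\sim P_X$ and the tower property then give $\Ed{P_X}{\kappa(\bar f(X))}\le\Ed{P_X}{\kappa(f(X))}=\Gamma(f,g)$. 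Chaining the two bounds proves~\eqref{eq:d:ieq}.

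I expect the main obstacle to be expository rather than mathematical: the two Jensen-plus-tower passages must carefully distinguish the paired variable $Y$ (resp.\ $X$) appearing in $\Gamma$ from the auxiliary copy $\Yt$ (resp.\ $\Xt$) used to define $\fproj{g}{t}$ (resp.\ $\fproj{f}{s}$), and one should note in passing that the argument never invokes the Markov hypothesis $X-s(X)-t(Y)-Y$, so it holds for arbitrary $s,t$, which is stronger than \defref{def:cont} requires. The only genuine input is the convexity of the log-moment-generating function $\kappa$, which is standard; finiteness of $\kappa$ is automatic since $\cX,\cY$ are finite, so no integrability issues arise.
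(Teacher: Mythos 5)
Your proof is correct and takes essentially the same route as the paper's: the appendix establishes \eqref{eq:d:ieq} by precisely the same two Jensen steps — convexity of $v \mapsto \exp(a^\T v)$ to handle the projection of $g$, then convexity of the log-moment-generating function to handle the projection of $f$ — only traversing the chain through the intermediate point $\Gamma(f,\fproj{g}{t})$ rather than your $\Gamma(\fproj{f}{s},g)$, which is immaterial. Your side remark that the Markov hypothesis is never invoked for this particular functional is also consistent with the paper's proof, which likewise uses only the tower property on the marginals.
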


In particular, let $X$ and $Y$ denote the data variable and categorical label, respectively. We use $h(x) \in \mathbb{R}^d$ to denote the feature representation, which corresponds to the feature in the last hidden layer, and denote the weight and bias term corresponding to $Y = y$ as $w(y) \in \mathbb{R}^d$ and $b(y) \in \mathbb{R}$, respectively. By applying the softmax function, we get a posterior parameterized by $h, w, b$:
\begin{align}
  \Pt^{(h, w, b)}_{Y|X}(y|x) = \frac{\exp(h^\T(x) w(y) + b(y))}{\sum_{y' \in \cY} \exp(h^\T(x) w(y') + b(y'))}.
  \label{eq:Pt}
\end{align}
Then, the log loss is given by the expected value of $-\log \Pt^{(h, w, b)}_{Y|X}$, i.e.,
 $\Ed{(\Xh, \Yh) \sim P_{X, Y}}{ -\log \Pt^{(h, w, b)}_{Y|X}(\Yh|\Xh)}$. %
Let $k = d + 1$, and we define the calibrated bias term $\beta(y) \defeq b(y) - \log P_Y(y)$, and features
\begin{align}
 f =
  \begin{bmatrix}
    1\\ h
  \end{bmatrix} \in \spcn{\cX}{k},\quad
  g =
  \begin{bmatrix}
    \beta\\ w
  \end{bmatrix} \in \spcn{\cY}{k}.
  \label{eq:fg:hw}
\end{align}
Then, we can rewrite the log loss as
$ - \E{f^\T(X)g(Y)}  %
                          + \Ed{P_X}{\log \Ed{P_Y}{\exp(f^\T(X)g(Y))}} + H(Y)$, where $H(Y)$ denotes the entropy of $P_Y$. Therefore, minimizing log loss is equivalent to the minimization of the following functional defined on $(f, g) \in \spcn{\cX}{k} \times \spcn{\cY}{k}$:
\begin{align}
&- \E{f^\T(X)g(Y)}
  + \Ed{P_X}{\log \Ed{P_Y}{\exp(f^\T(X)g(Y))}}\notag\\
  &\qquad+ \E{\id_{\{1\}}(f_1(X))},\label{eq:log:ext}
\end{align}
where $f_1$ denotes the first dimension of $f$, and where $\id$ is the characteristic function as defined in \eqref{eq:inf}. Note that $\E{\id_{\{1\}}(f_1(X))} < \infty$ if and only if $\prob{f_1(X) = 1} = 1$, which guarantees that
the optimal $f$ takes the form of \eqref{eq:fg:hw}.

From \factref{fact:log} and \proptyref{propty:cvx}, \proptyref{propty:loss:e}, \proptyref{propty:agg}, 
the extended log loss \eqref{eq:log:ext} is a \dname. Therefore, for a properly regularized log loss (e.g., \proptyref{propty:ell2}), the optimal $h(X)$ and $w(Y)$ are dependence induced. It is worth mentioning that though $b(Y)$ is not dependence induced in general, the calibrated bias $\beta(Y)$ is dependence induced.

\subsubsection{Support Vector Machine \cite{cortes1995support}} We then consider the loss applied in support vector machine (SVM). In particular, we consider a binary classification problem on data $X$, where the label $Y \in  \cY = \{-1, 1\}$ is balanced. Suppose our goal is to find the optimal feature $h(x) \in \mathbb{R}^d$, weight $\vec{w} \in \mathbb{R}^d$, and bias $b \in \mathbb{R}$. The SVM loss for the separating hyperplane $\ip{\vec{w}}{h(x)} + b = 0$ can be written as \cite{hastie2009elements}
  \begin{align}
    \Ed{P_{X, Y}}{ \ellh(Y, \ip{\vec{w}}{h(X)} + b)} + \lambda \cdot \norm{\vec{w}}^2,
    \label{eq:svm}
  \end{align}
     where  $\ellh \colon \cY \times \mathbb{R} \to \mathbb{R}$ denotes the hinge loss, defined as $ \ellh(y, z) \defeq (1 - yz)^+ $ with $x^+ \defeq \max\{0, x\}$, and where $\lambda > 0$ is a hyperparameter. Let $k = d + 1$ and define feature functions $f \in \spcn{\cX}{k}, g \in \spcn{\cY}{k}$ as
     \begin{align}
       \label{eq:fg:svm}
       f(x) =
       \begin{bmatrix}
         h(x)\\1
       \end{bmatrix},
       \quad
       g(y) = y  \cdot
       \begin{bmatrix}
         \vec{w}\\
         b
       \end{bmatrix}.
     \end{align}
     Then, we have $\ellh(Y, \ip{\vec{w}}{h(X)} + b) = (1 - \ip{\vec{w}Y}{h(X)} - bY)^+ = (1 - \ip{f(X)}{g(Y)})^+$ and
 $\norm{\vec{w}}^2 = \E{\norm{g_{[d]}(Y)}^2}$ where 
we have defined $g_{[d]}\defeq (g_1, \dots, g_d)^\T$ and thus $g_{[d]}(Y) =  Y \cdot \vec{w}$. 

Therefore, the minimization of \eqref{eq:svm} is equivalent to minimizing the functional
\begin{align}
  &\E{\bigl(1 - \fip{f(X)}{g(Y)}\bigr)^+}  + \lambda \cdot \E{\norm{g_{[d]}(Y)}^2} \notag\\
  &\qquad\quad+  \E{\id_{\{1\}}(f_{k}(X))} + \id_{\{0\}}(\E{g(Y)}),
    \label{eq:svm:ext}
\end{align}
where $f_{k}(X)$ denotes the $k$-th dimension of $f$. To see this, note that $\E{\id_{\{1\}}(f_{k}(X))} < \infty$ if and only if $\prob{f_{k}(X) = 1} = 1$, and $\id_{\{0\}}(\E{g(Y)}) < \infty$ if and only if $\E{g(Y)} = 0$, i.e., $g(-1) = -g(1)$. As a result, the optimal solution $(f, g)$ that minimizes \eqref{eq:svm:ext} must take the form \eqref{eq:fg:svm}. Note that from  \proptyref{propty:cvx}, the first three terms of \eqref{eq:svm:ext} are \dnames{}. Then, by combining
\proptyref{propty:loss:e} and \proptyref{propty:agg}, we can verify \eqref{eq:svm:ext} is a \dname.

\subsubsection{Variational Forms of $\varphi$-Divergences} From \proptyref{propty:cvx}, $\left(-\Ed{P_{X, Y}}{f^\T(X)g(Y)} + \Ed{P_XP_Y}{u(f^\T(X)g(Y))}\right)$ is a \dname{} for all convex function $u \colon \mathbb{R} \to \mathbb{R}$. Such functionals appear in characterizing variational forms of $\varphi$-divergences \cite{nguyen2010estimating}, where $u = 
\varphi^\star$ is the convex conjugate of some convex function $\varphi \colon [0, \infty) \to \mathbb{R}$ with $\varphi(1) = 0$.

\subsubsection{Learning Techniques: Regularization, Constrained Features, and Feature Nesting}\label{sec:d:regularize}
 We consider several learning techniques that preserve the \dname{} structure. We first consider the regularization. Given a \dname{} $L(f, g)$, we can construct the regularized loss $L(f, g) + \lambda \cdot R(f, g)$, where $R(f, g)$ is the regularization term and $\lambda > 0$ is the hyperparameter. Then, from \proptyref{propty:agg}, the regularized loss is a \dname{} if $R(f, g)$ is a \dname. From \proptyref{propty:cvx}, the
 squared distance $\E{\norm{f(X) - g(Y)}^2}$, and generally $\E{\norm{f(X) - g(Y)}_p}$ for $p \geq 1$ are \dnames, where $\norm{\vec{v}}_p\defeq \left(\sum_{i} v_i^p\right)^{\frac{1}{p}}$ denotes the $p$-norm. 
Similarly, $\E{\norm{f(X)}_p}$, $\E{\norm{g(Y)}_p}$, $p \geq 1$ and $\E{\norm{f(X)}^2}$, $\E{\norm{g(Y)}^2}$ are also \dnames{}. During implementation, the expectation is estimated by the corresponding empirical averages over each mini-batch.

In contrast to the batch-wise regularization terms, another common practice is directly introducing constraints on each instance of feature representations, e.g., force features to be nonnegative. With the characteristic function \eqref{eq:inf}, we can use an extended loss to characterize such constraints. For example, suppose the original loss $L(f, g)$ is a \dname{} and $f(X)$ is constrained to be within a set $\cC \subset \mathbb{R}^k$. Then we can consider the extended loss $L(f, g) + \E{\id_{\cC}(f(X))}$, which leads to the same optimal solution. From \proptyref{propty:cvx}, the extended loss is a \dname{} if $\cC$ is convex. Two important examples of such convex sets are: 1) $p$-norm balls, e.g.,  $\cC = \{\vec{v} \in \mathbb{R}^k\colon \norm{\vec{v}}_p \leq 1\}$, where each feature instance is projected onto the norm ball; 2) nonnegative orthant, i.e., $\cC = \{\vec{v} \in \mathbb{R}^k\colon v_i \geq 0\}$, corresponding to learning nonnegative features.

The regularization methods are also used to combine multiple training objectives by optimizing their weighted sum. From \proptyref{propty:agg}, if each objective is a \dname, the weighted sum will also be a \dname. A special example is \emph{Feature Nesting}, which constructs the weighted sum of the same loss applied on a set of nested features. Specifically, suppose the loss $L$ is defined for all $d$-dimensional feature function pair $(f, g)$, e.g., $\E{f^\T(X)g(Y)}$. Then, give $(f, g) \in \spcn{\cX}{k} \times \spcn{\cY}{k}$, we consider the nested loss 
\begin{align}
  \sum_{i =1 }^k c_i \cdot  L(f_{[i]}, g_{[i]}),\label{eq:nest}  
\end{align}
 where  $c_i, i = 1, \dots, k$ are non-negative weights, and where for each $i \in [k]$, we have defined $f_{[i]} \in \spcn{\cX}{i}, g_{[i]} \in \spcn{\cY}{i}$ as
\begin{align}
f_{[i]} \defeq (f_1, \dots, f_i)^\T, \qquad
g_{[i]} \defeq (g_1, \dots, g_i)^\T.\label{eq:fg:i}  
\end{align}
 Such construction is applied to induce structures on the representations $f, g$, referred to as the \emph{nesting technique} \cite[Section 4]{xu2024neural}, also called \emph{matryoshka representation learning} \cite{kusupati2022matryoshka}. The nested loss \eqref{eq:nest} is a \dname{} if $L$ is a \dname.

\subsection{Representations Induced by Strict Dependence} %

We conclude this section by discussing a special case where $X$ and $Y$ are strictly dependent, and we have the following characterization [cf. \corolref{cor:ss:extreme}]. 
\begin{proposition}
  \label{prop:sd}
  Given a \rdname{} $L(f, g; P_{X, Y})$ defined for $f \in \spcn{\cX}{k}, g \in \spcn{\cY}{k}$, let us define the representation learning algorithm $\alg\colon (X, Y) \mapsto \argmin_{f, g}L(f, g; P_{X,Y})$. Suppose $Y = \ly(X)$ for a mapping $\ly \colon \cX \to \cY$. Then, we have
  \begin{align}
    \alg(X, Y) = \{(\phi \circ \ly, \psi) \colon (\phi, \psi) \in \alg(Y, Y)\}.
    \label{eq:nc1}
  \end{align}
\end{proposition}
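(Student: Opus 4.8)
The plan is to collapse the optimization defining $\alg(X,Y)$ onto the one defining $\alg(Y,Y)$ by invoking the two axioms of a \dname{} (\defref{def:cont}) with the map $\ly$ on the $X$-side and the identity on the $Y$-side. The structural fact driving everything is that $Y=\ly(X)$ forces $P_{X,Y}(x,y)=P_X(x)\kron_{\{y=\ly(x)\}}$, so the joint law of $(\ly(X),Y)$ is exactly $P_{Y,Y}$; applying the transformation identity \eqref{eq:trans} with $\xi=\ly$ and $\eta=\mathrm{id}_{\cY}$ then gives
\begin{align}
  L(\phi\circ\ly,\,\psi;\,P_{X,Y})=L(\phi,\psi;\,P_{Y,Y})\quad\text{for all }(\phi,\psi)\in\spcn{\cY}{k}\times\spcn{\cY}{k}.\label{eq:plan-trans}
\end{align}
Since every $(\phi\circ\ly,\psi)$ is admissible for the $(X,Y)$-problem, \eqref{eq:plan-trans} immediately yields $\min_{f,g}L(f,g;P_{X,Y})\le\min_{\phi,\psi}L(\phi,\psi;P_{Y,Y})$, and it also secures the inclusion ``$\supseteq$'' in \eqref{eq:nc1} once the two minima are shown to coincide: a minimizer $(\phi,\psi)$ of $L(\cdot,\cdot;P_{Y,Y})$ maps to a pair $(\phi\circ\ly,\psi)$ attaining the same $L(\cdot,\cdot;P_{X,Y})$-value.

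To get the reverse inequality I would apply the second axiom \eqref{eq:d:ieq} with $s=\ly$ and $t=\mathrm{id}_{\cY}$ (post-composed with a fixed injection $\cY\hookrightarrow\mathbb{R}$ if one insists that $s,t$ be vector-valued). The required Markov relation $X-\ly(X)-Y-Y$ holds because $Y=\ly(X)$; moreover $\fproj{g}{t}=g$, while $\fproj{f}{s}=\phi_f\circ\ly$ with $\phi_f(y)\defeq\E{f(X)\mid Y=y}$ (defined on all of $\cY$ since $P_Y$ has full support). Hence for every $(f,g)$,
\begin{align*}
  L(\phi_f,g;P_{Y,Y})=L(\phi_f\circ\ly,g;P_{X,Y})=L(\fproj{f}{s},\fproj{g}{t};P_{X,Y})\le L(f,g;P_{X,Y}),
\end{align*}
by \eqref{eq:plan-trans} and then \eqref{eq:d:ieq}; taking the infimum over $(f,g)$ gives $\min_{\phi,\psi}L(\phi,\psi;P_{Y,Y})\le\min_{f,g}L(f,g;P_{X,Y})$, so the two minima agree at a common value $v_k$. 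For the inclusion ``$\subseteq$'', take $(f,g)\in\alg(X,Y)$, so $L(f,g;P_{X,Y})=v_k$; since $v_k$ is the minimum, the inequality in the displayed chain must be an equality, i.e.\ $L(\fproj{f}{s},\fproj{g}{t};P_{X,Y})=L(f,g;P_{X,Y})=v_k<\infty$. Because $L$ is \emph{\radj}, this equality forces $(f,g)=(\fproj{f}{s},\fproj{g}{t})=(\phi_f\circ\ly,g)$; and since $L(\phi_f,g;P_{Y,Y})=L(f,g;P_{X,Y})=v_k$ shows $(\phi_f,g)\in\alg(Y,Y)$, the pair $(f,g)=(\phi_f\circ\ly,g)$ lies in the right-hand side of \eqref{eq:nc1}, completing the set equality.

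I expect two points to need care. First, $\alg$ is set-valued, so both inclusions of \eqref{eq:nc1} must be argued, and both hinge on the min-value identity $\min_{f,g}L(f,g;P_{X,Y})=\min_{\phi,\psi}L(\phi,\psi;P_{Y,Y})$, whose two directions invoke the two \dname{} axioms respectively. Second, the assumption that $L$ is \emph{\radj} (together with $v_k<\infty$) is precisely what upgrades the merely distributional fact ``$f(X)$ is an abstraction of $Y$'' --- already available from \corolref{cor:ss:extreme} --- into the pointwise factorization $f=\phi_f\circ\ly$ needed to land $(f,g)$ inside the right-hand side of \eqref{eq:nc1}. A heavier alternative route, if one prefers to pass through maximal correlation: verify $\lpmi_{X;Y}(x,y)=\lpmi_{Y;Y}(\ly(x),y)$, deduce from uniqueness of the modal decomposition that the maximal correlation functions $\fb,\gb$ of $(Y,Y)$ satisfy $\fss=\fb\circ\ly$ and $\gss=\gb$, note $\fss(X)=\fb(Y)$, and combine the last claim of \thmref{thm:sep:loss} with the composition form of \thmref{thm:sep:alg} to read off \eqref{eq:nc1} directly.
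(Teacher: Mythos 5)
Your argument is correct and is the direct proof the paper's machinery is set up to deliver (the paper itself states \propref{prop:sd} without proof): you apply \eqref{eq:trans} with $\xi=\ly$, $\eta=\mathrm{id}$ to identify $L(\phi\circ\ly,\psi;P_{X,Y})$ with $L(\phi,\psi;P_{Y,Y})$, apply \eqref{eq:d:ieq} with $s=\ly$, $t=\mathrm{id}$ (valid since $X-\ly(X)-Y-Y$ is Markov) to equate the two minimum values, and invoke regularity to force the pointwise factorization $f=\phi_f\circ\ly$ needed for the ``$\subseteq$'' inclusion. Your two flagged caveats --- the finite-valued-minimum assumption needed for the regularity clause to apply, and encoding $\ly$ and $\mathrm{id}_\cY$ as real-valued statistics via an injection $\cY\hookrightarrow\mathbb{R}$ --- are exactly the right points of care, and both inclusions of the set equality are properly argued.
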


 Note that a specific case of \propref{prop:sd} is the classification scenario, where $Y$ corresponds to the label of data $X$. The mapping $\ly$ 
corresponds to the labeling function, and $Y = \ly(X)$ indicates that we can obtain such a label from data without any ambiguities. This happens when $P_{X, Y}$ is the empirical distribution of a classification dataset $\{(x_i, y_i)\}_{i = 1}^n$ with distinct $X$ samples, i.e., $x_i \neq x_j$ for all $i \neq j$. For such datasets, from \eqref{eq:nc1}, the optimal feature function $f$ satisfies $f(x) = \phi(\ly(x))$, i.e., all $X$ samples with the same label $\ly(X)$ share the same representation.

This phenomenon has been observed in training classification deep neural networks with cross entropy \cite{papyan2020prevalence}, referred to as the ``neural collapse''\footnote{The Neural Collapse (NC) \cite{papyan2020prevalence} was defined as the collection of four interconnected phenomena, NC1-NC4, where NC1 corresponds to the phenomenon of $f(X)$ being an abstraction of $Y$, also referred to as the ``variability collapse.''}, where theoretical characterizations were discussed therein and also in subsequent works \cite{mixon2022neural, zhu2021geometric}. Our development demonstrates that this phenomenon is essentially a consequence of the special dependence structure of training data and is shared by a large collection of losses.

\optional{in addition, if balanced, $\rank = K - 1$, $P_{S,T}$ takes nice form. symmetry of the loss implies symmetry of representations;}

\optional{applications in computing dependence induced quantities, e.g., mutual info, ..., }

\section{Learning With Feature Adapters} %

The composition structure of dependence induced representations provides a structured implementation of dependence learning algorithms, where we use $S = \fss(X), T = \gss(Y)$ as an intermediate interface for feature representations.
As shown in \figref{fig:nn:adp}, when $S, T$ are given, the minimization of \dname{} $L(f, g; P_{X, Y})$ can be solved by minimizing $L(\phi, \psi; P_{S, T})$ over $(\phi, \psi)$, and we can retrieve the optimal representations via $f(X) = \phi(S), g(Y) = \psi(T)$ [cf. \thmref{thm:sep:loss}]. We call such $\phi, \psi$ as \emph{feature adapters}. Therefore, we can obtain solutions to different \dnames{} by training only the feature adapters $\phi, \psi$, without changing $\fss, \gss$. The maximal correlation functions $\fss, \gss$ can be learned separately, e.g., as pre-trained deep neural networks, where the training details will be discussed later.

\begin{figure}[!th]
  \centering
  \resizebox{.34\textwidth}{!}{%Modified from: %https://texample.net/tikz/examples/noise-shaper/
\begin{tikzpicture}[auto, thick, node distance=2cm, >=latex'%stealth% >=triangle 45
  ]
  % draw H-score's
  \def\dy{1.3}

  \draw node [block] (L) at (5.5, -3 * \dy)  {\Large $L(\phi, \psi)$};

  \foreach \i/\ti/\tu/\to/\loc in {1/$X$/$S$/$f(X)$/above, 2/$Y$/$T$/$g(Y)$/below}% [count = \y] 
  {
    \draw node [ext, frozenfill, rotate = -90]  (f-\i) at (0, -2 * \dy * \i cm){};
    \draw node [ext, right of = f-\i, node distance = 2.4cm, rotate = -90]  (phi-\i) {};
    \draw node [input, left of = f-\i] (x-\i) {}
    node [xshift = -0.5cm] at (x-\i) {\Large \ti}
    node [xshift = -0.6mm] at (x-\i) {\Large $\circ$};
    \draw [->] (x-\i) -- (f-\i);
    \draw [->] (f-\i)-- (phi-\i) node [\loc, midway] {\Large \tu};
    \draw [->] (phi-\i) -| (L) node [\loc, pos = .18] {\large\to};
  }

  % \foreach \i/\to in {1/$f(X)$, 2/$g(Y)$}
  % {
  %   % \draw [->] (phi-\i) -- + (2, 0) node [above, midway] {\Large \to};
  % }

  \draw node at (f-1)  {\Large $\fss$}
  node at (f-2)  {\Large $\gss$}
  node at (phi-1)  {\Large $\phi$}
  node at (phi-2)  {\Large $\psi$};

  % \draw [->] (y) -- + (-.5, 0) node [bullet] {\textbullet} |- (g-\i);

\end{tikzpicture}

%%% Local Variables:
%%% mode: latex
%%% TeX-master: "../root"
%%% End:}
  \caption{Feature learning by training adapters $\phi, \psi$. The feature extractors $\fss$, $\gss$ can be frozen pre-trained networks or non-trainable modules.
  }
  \label{fig:nn:adp}
\end{figure}
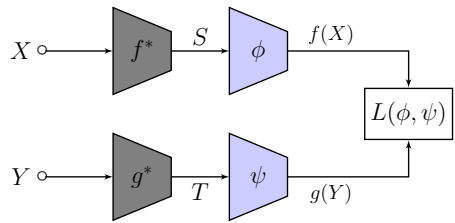
The separation enables more efficient implementation of the learning procedures.
In particular, when $S, T$ have much simpler structures compared with original $X, Y$, we can use lightweight neural networks as the adapters $\phi, \psi$, and the adapter training does not require back-propagating learning errors  \cite{rumelhart1986learning} to the $\fss, \gss$ modules. %

 Note that the representation interface $S, T$ are both informationally and computationally efficient: from the minimal sufficiency, $S, T$ contain necessary information without introducing redundancies; even without adaptations,  we can use $\fss, \gss$ to conduct inference tasks by some linear assembling processes, as discussed in \cite{xu2024neural}.

\subsection{Feature Adapters for Dependence Learning}%

The feature adapters allow us to adapt to different \dnames{} by training only the adapters. In particular, we can use feature adapters to implement constrained feature learning and inference-time hyperparameter tuning.
\subsubsection{Constrained Feature Learning With Adapters} When the training loss is a \dname, we can implement constrained feature learning with the architectures shown in \figref{fig:nn:adp}. Specifically, from our discussions in \secref{sec:d:regularize}, we can effectively train the adapters $\phi, \psi$ to learn features restricted to $p$-norm balls or non-negative features $f$, $g$.

\subsubsection{Inference-time Hyperparameter Tuning} Suppose we have a class of \dnames{} $L_{\lambda}(f, g; P_{X, Y})$, where $\lambda \in \cI$ is the hyperparameter, e.g., the weights for regularization terms. The optimal feature functions that minimizes $L_{\lambda}$ depend on $\lambda$, which we denote by $(f^{(\lambda)}, g^{(\lambda)})$. The values of $\lambda$ are often determined by comparing performances of different $\lambda$, which requires learning $(f^{(\lambda)}, g^{(\lambda)})$ for a set of different values. 

\begin{figure}[!th]
  \centering
  \resizebox{.34\textwidth}{!}{%Modified from: %https://texample.net/tikz/examples/noise-shaper/
\begin{tikzpicture}[auto, thick, node distance=2cm, >=latex'%stealth% >=triangle 45
  ]
  % draw H-score's
  \def\dy{1.3}

  \draw node [block] (L) at (5.5, -3 * \dy)  {\Large $L(\phi, \psi)$};

  \foreach \i/\ti/\tu/\to/\loc in {1/$X$/$S$/$f^{(\lambda)}$/above, 2/$Y$/$T$/$g^{(\lambda)}$/below}% [count = \y] 
  {
    \draw node [ext, frozenfill, rotate = -90]  (f-\i) at (0, -2 * \dy * \i cm){};
    \draw node [ext, right of = f-\i, node distance = 2.4cm, rotate = -90]  (phi-\i) {};
    \draw node [input, left of = f-\i] (x-\i) {}
    node [xshift = -0.5cm] at (x-\i) {\Large \ti}
    node [xshift = -0.6mm] at (x-\i) {\Large $\circ$};
    \draw [->] (x-\i) -- (f-\i);
    \draw [->] (f-\i)-- (phi-\i) node [\loc, midway] {};%{\Large \tu};
    \draw [->] (phi-\i) -| (L) node [\loc, pos = .18] {\large\to};
  }

  % \foreach \i/\to in {1/$f(X)$, 2/$g(Y)$}
  % {
  %   % \draw [->] (phi-\i) -- + (2, 0) node [above, midway] {\Large \to};
  % }

  \draw node at (f-1)  {\Large $\fss$}
  node at (f-2)  {\Large $\gss$}
  node at (phi-1)  {\Large $\phi^{(\lambda)}$}
  node at (phi-2)  {\Large $\psi^{(\lambda)}$};

  % \draw [<-] (phi-1) -- ++ (0, -1);
  % \draw [<-] (phi-2) -- ++ (0, 1);
  \draw node (l) at ($(phi-1)!.5!(phi-2)$) {\large $\lambda$};
  \draw [->] (l) -- (phi-1);
  \draw [->] (l) -- (phi-2);

  % \draw [->] (y) -- + (-.5, 0) node [bullet] {\textbullet} |- (g-\i);

\end{tikzpicture}

%%% Local Variables:
%%% mode: latex
%%% TeX-master: "../root"
%%% End:}
  \caption{The feature adapters $\phi^{(\lambda)}, \psi^{(\lambda)}$ are parameterized by the hyperparameter $\lambda$, which is tunable during inference.} %
  \label{fig:nn:lambda}
  \end{figure}
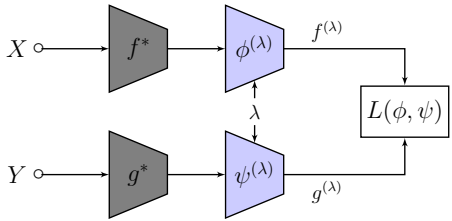

From \thmref{thm:sep:loss}, we have
$(f^{(\lambda)}, g^{(\lambda)}) = (\phi^{(\lambda)} \circ \fss, \psi^{(\lambda)} \circ \gss)$. 
Note that only the adapters $(\phi^{(\lambda)}, \psi^{(\lambda)})$ depend on the $\lambda$, and can be learned by minimizing $L_{\lambda}(\phi^{(\lambda)}, \psi^{(\lambda)}; P_{S, T})$. Therefore, it suffices to learn feature adapters parameterized by $\lambda$. Specifically, we define the loss 
$L(\phi, \psi; P_{S, T}) \defeq \Ed{\lambda \sim P_{\lambda}}{L_{\lambda}(\phi^{(\lambda)}, \psi^{(\lambda)}; P_{S, T})}$,
 where $P_\lambda$ is a distribution supported on $\cI$. 
Then, minimizing $L(\phi, \psi; P_{S, T})$ leads to parameterized $\phi^{(\lambda)}, \psi^{(\lambda)}$, which give parameterized representations $f^{(\lambda)}(X), g^{(\lambda)}(Y)$, as shown in \figref{fig:nn:lambda}.

In contrast to the common practice, this design allows us to tune the hyperparameter $\lambda$ over a continuous range, without retraining or requiring validation samples. 
This can be useful in learning scenarios where we have many different downstream tasks but few validation samples.

\subsection{Learning Maximal Correlation Functions }

The maximal correlation functions can be effectively learned from data via maximizing the nested H-score \cite{xu2024neural}. Specifically, for $f \in \spcn{\cX}{k}$, $g \in \spcn{\cY}{k}$, we define the nested H-score $\Hnest(f, g)$ as\footnote{
 The form of the nested H-score is nonunique; for example, the weights before each H-score can be arbitrary positive numbers. See \cite[Section 4.1]{xu2024neural} for detailed discussions.
 }
 \begin{align}
  \Hnest(f, g) \defeq \sum_{i = 1}^k \Hs(f_{[i]}, g_{[i]}),
  \label{eq:Hnest:def}
\end{align}
where $f_{[i]}, g_{[i]}$ are as defined in \eqref{eq:fg:i}, and where
 $\Hs(f, g)$ is the H-score, defined as
\begin{align}%
  \Hs(f, g)
  &\defeq \E{f^{\T}(X) g(Y)}
    - \left(\E{f(X)}\right)^{\T}\E{g(Y)}\notag\\
   &\qquad - \frac12  \cdot \tr\left(\Lambda_{f}\Lambda_{g}\right),
     \label{eq:H:def}
\end{align}
where $\La_f = \E{f(X)f^\T(X)}$, $\La_g = \E{g(Y)g^\T(Y)}$. It can be verified %
that $-\Hnest(f, g)$ is a \rdname. 

Note that since $\fss(X), \gss(Y)$ are minimal sufficient statistics, it is possible to learn $\fss(X), \gss(Y)$ by adapting existing sufficient statistics, which can significantly reduce the computation costs. Such sufficient statistics can be obtained from pretrained models, e.g., deep classifiers. In engineering applications, we can also obtain analytical forms of sufficient statistics from corresponding physical system models.

\optional{remark1: also worth mention that generally;
 it is possible to replace $\fss$, $\gss$ by the first few dimensions, as long as they are informationally equivalent. 
 }

\optional{why we need the stochastic mapping}

\bibliographystyle{IEEEtran}
\bibliography{ref}

\newpage

\optional{ss as partition; coarser partition, merging;}

\optional{optimization problem:
  \begin{align*}
    &\minimize_{f\colon \cX \to \cS, h \colon \cS \times \cY \to \mathbb{R} } |\cS|\\
    &\st h(f(x), y) = \lpmi_{X; Y}(x, y)
  \end{align*}
  \begin{align*}
    &\minimize_{f\colon \cX \to \cS, g \colon \cY \to \cT, h \colon \cS \times \cT \to \mathbb{R} } |\cS| + |\cT|\\
    &\st h(f(x), g(y)) = \lpmi_{X; Y}(x, y)
  \end{align*}
  \begin{align*}
    &\maximize_{f\colon \cX \to \cS, g \colon \cY \to \cS} |\cS|\\
    &\st f(x) = g(y)
  \end{align*}
}

\optional{ (continuous + unique solution + one-to-one mapping, -> dependence induced), no $Z, W$ needed }

\optional{ 
  \begin{itemize}
  \item extensions: single sided version.
  \item operations preserve the property, optimizing over a (matrix) scaling factor, shift, ....
\end{itemize}
}

\optional{ \xedit{operator monotone function}, on $\La_f$, $\La_g$ ; }

\end{document}